\documentclass[sigconf]{acmart}

\makeatletter                   
\def\mdseries@tt{m}             
\makeatother                    
\usepackage{color}
\usepackage{hyperref}           
\hypersetup{
    colorlinks=true,
    linkcolor=blue,
    filecolor=red,      
    urlcolor=magenta,
    breaklinks=true,            
}
\usepackage{breakurl}           

\usepackage{booktabs} 

\usepackage{qtree}
\usepackage[title]{appendix}
\usepackage{caption}
\usepackage{subcaption}
\usepackage{makecell}
\usepackage{array,multirow,graphicx}
\usepackage{float}


\usepackage{amsmath,amssymb,amsthm}
\usepackage[algoruled,vlined,nofillcomment]{algorithm2e}

\usepackage{hyperref}
\usepackage{cleveref}
\usepackage[nomessages]{fp}
\newcommand{\gloveScale}[1]{\FPeval{\epsilonValue}{round(#1 / 17.1677, 0)} \epsilonValue}

\usepackage[algoruled,vlined,nofillcomment]{algorithm2e}

\Crefname{figure}{Fig.}{Figs.}
\Crefname{section}{Sec.}{Secs.}
\Crefname{theorem}{Thm.}{Thms.}

\usepackage[textsize=footnotesize,color=green!40]{todonotes}

\newtheorem{theorem}{Theorem}

\newcommand{\cX}{\mathcal{X}}
\newcommand{\cY}{\mathcal{Y}}
\newcommand{\cW}{\mathcal{W}}
\newcommand{\R}{\mathbb{R}}
\renewcommand{\Pr}{\mathsf{Pr}}
\newcommand{\norm}[1]{\| #1 \|}
\newcommand{\argmin}{\mathop{argmin}}
\newcommand{\supp}{\mathrm{supp}}

\newcommand{\adhoc}{\emph{ad hoc}}
\newcommand{\apriori}{\emph{a priori}}
\newcommand{\defacto}{\emph{de facto}}
\newcommand{\etc}{\emph{etc.}}
\newcommand{\eg}{\emph{e.g.}}
\newcommand{\Eg}{\emph{E.g.}}
\newcommand{\ie}{\emph{i.e.}}
\newcommand{\vs}{\emph{vs.}}

\usepackage{acronym}
\acrodef{DP}{Differential Privacy}
\acrodef{LDP}{Local \ac{DP}}
\acrodef{mDP}{Metric \ac{DP}} 
\acrodef{NLP}{Natural Language Processing}
\acrodef{ML}{Machine Learning}
\acrodef{AI}{Artificial Intelligence}
\acrodef{biLSTM}{Bidirectional LSTM} 
\acrodef{GESD}{Geometric mean of Euclidean and Sigmoid Dot product}
\acrodef{MAP}{Mean Average Precision}
\acrodef{MRR}{Mean Reciprocal Rank}
\acrodef{PII}{\emph{personally identifiable information}}
\acrodef{AUC}{Area Under Curve}







\setcopyright{none}
\settopmatter{printacmref=false, printccs=false, printfolios=true}

\begin{document}
\sloppy                         

\title{Privacy- and Utility-Preserving Textual Analysis via Calibrated Multivariate Perturbations}

\author{Oluwaseyi Feyisetan}
\affiliation{Amazon}
\email{sey@amazon.com}

\author{Borja Balle}
\affiliation{Amazon}
\email{pigem@amazon.co.uk}

\author{Thomas Drake}
\affiliation{Amazon}
\email{draket@amazon.com}

\author{Tom Diethe}
\affiliation{Amazon}
\email{tdiethe@amazon.co.uk}

\begin{abstract}
Accurately learning from user data while providing quantifiable privacy guarantees provides an opportunity to build better ML models while maintaining user trust.
This paper presents a formal approach to carrying out privacy preserving text perturbation using the notion of $d_{\chi}$-privacy designed to achieve geo-indistinguishability in location data. Our approach applies carefully calibrated noise to vector representation of words in a high dimension space as defined by word embedding models. We present a privacy proof that satisfies $d_{\chi}$-privacy where the privacy parameter $\varepsilon$ provides guarantees with respect to a distance metric defined by the word embedding space. We demonstrate how $\varepsilon$ can be selected by analyzing plausible deniability statistics backed up by large scale analysis on $\textsc{GloVe}$ and $\textsc{fastText}$ embeddings. We conduct privacy audit experiments against $2$ baseline models and utility experiments on $3$ datasets to demonstrate the tradeoff between privacy and utility for varying values of $\varepsilon$ on different task types. 
Our results demonstrate practical utility (< 2\% utility loss for training binary classifiers) while providing better privacy guarantees than baseline models.
\end{abstract}

%
%
\begin{CCSXML}
<ccs2012>
<concept>
<concept_id>10002978.10003029.10011150</concept_id>
<concept_desc>Security and privacy~Privacy protections</concept_desc>
<concept_significance>500</concept_significance>
</concept>
</ccs2012>
\end{CCSXML}

\ccsdesc[500]{Security and privacy~Privacy protections}


\maketitle

\section{Introduction}
Privacy-preserving data analysis is critical in the age of \ac{ML} and \ac{AI} where the availability of data can provide gains over tuned algorithms. However, the inability to provide sufficient privacy guarantees impedes this potential in certain domains such as with user generated queries. As a result, computation over sensitive data has been an important goal in recent years \cite{dinur2003revealing,gentry2009fully}. On the other hand,
private data that has been inappropriately revealed carries a high cost, both in terms of reputation damage and potentially fines, to data custodians charged with securing curated information. In this context, we distinguish between security and privacy breaches as follows: a security breach is unintended or unauthorized system usage, while a privacy breach is unintended or unauthorized data disclosure during intended system uses \cite{bambauer2013privacy}. Unintended disclosures, or accidental publications which lead to re-identification have been two common causes of recent privacy breaches \cite{barbaro2006face,narayanan2008robust,venkatadri2018privacy,abowd2018us,dinur2003revealing,pandurangan2014taxis,tockar2014riding}. While it is possible to define rules and design access policies to improve data security, understanding the full spectrum of what can constitute a potential privacy infraction can be hard to predict \apriori{}. As a result, solutions such as pattern matching, \adhoc{} filters and anonymization strategies are provably non-private. This is because such approaches cannot anticipate what side knowledge an attacker can use in conjunction with the released dataset. One definition that takes into account the limitations of existing approaches by preventing data reconstruction and protecting against any potential side knowledge is \acl{DP}.

\acf{DP} \cite{dwork2006calibrating}, which originated in the field of statistical databases, is one of the foremost standards for defining and dealing with privacy and disclosure prevention. At a high level, a randomized algorithm is differentially private if its output distribution is \textit{similar} when the algorithm runs on two neighboring input databases. The notion of similarity is controlled by a parameter $\varepsilon \geq 0$ that defines the strength of the privacy guarantee (with $\varepsilon = 0$ representing absolute privacy, and $\varepsilon = \infty$ representing null privacy). Even though \ac{DP} has been applied to domains such as geolocation \cite{andres2013geo}, social networks \cite{narayanan2009anonymizing} and deep learning \cite{abadi2016deep,shokri2015privacy}, less attention has been paid to adapting variants of \ac{DP} to the context of \ac{NLP} and the text domain \cite{Coavoux2018PrivacypreservingNR,Weggenmann_2018}.

We approach the research challenge of preventing leaks of private information in text data by building on the quantifiable privacy guarantees of \ac{DP}.
In addition to these formal privacy requirements, we consider two additional requirements informed by typical deployment scenarios.
First, the private mechanism must map text inputs to text outputs. This enables the mechanism to be deployed as a filter into existing text processing pipelines without additional changes to other components of the system.
Such a requirement imposes severe limitations on the set of existing mechanisms one can use, and in particular precludes us from leveraging hash-based private data structures commonly used to identify frequent words \cite{erlingsson2014rappor,thakurta2017learning,wang2017locally}.
The second requirement is that the mechanism must scale to large amounts of data and be able to deal with datasets that grow over time.
This prevents us from using private data synthesis methods such as the ones surveyed in \cite{bowen2016comparative} because they suffer from severe scalability issues even in moderate-dimensional settings, and in general cannot work with datasets that grow over time.
Together, these requirements push us towards solutions where each data record is processed independently, similar to the setting in \emph{\ac{LDP}} \cite{kasiviswanathan2011can}.
To avoid the curse of dimensionality of standard \ac{LDP} we instead adopt $d_{\chi}$-privacy \cite{andres2013geo,chatzikokolakis2013broadening,alvim2018local}, a relaxed variant of local \ac{DP} where privacy is defined in terms of the distinguishability level between inputs (see \Cref{sec:mdp} for details).

Our main contribution is a scalable mechanism for \emph{text analysis} satisfying $d_{\chi}$-privacy.
The mechanism operates on individual data records -- adopting the \emph{one user, one word} model as a baseline corollary to the \emph{one user, one bit} model in the \ac{DP} literature \cite{kasiviswanathan2011can}. It takes a private input word $x$, and returns a privatized version $\hat{x}$ where the word in the original record has been `perturbed'.
The perturbation is obtained by first using a pre-determined \emph{word embedding} model to map text into a high-dimensional vector space, adding noise to this vectorial representation, and the projecting back to obtain the perturbed word.
The formal privacy guarantees of this mechanism can be interpreted as a degree of plausible deniability \cite{bindschaedler2017plausible} conferred to the contents of $x$ from the point of view of an adversary observing the perturbed output. We explore this perspective in detail when discussing how to tune the privacy parameters of our mechanism.

The utility of the mechanism is proportional to how well the semantics of the input text are preserved in the output.
The main advantage of our mechanism in this context is to allow a higher degree of semantics preservation by leveraging the geometry provided by word embeddings when perturbing the data.
In this work we measure semantics preservation by analyzing the performance obtained by using the privatized data on downstream \ac{ML} tasks including binary sentiment analysis, multi-class classification, and question answering.
This same methodology is typically used in \ac{NLP} to evaluate unsupervised learning of word embeddings \cite{schnabel2015evaluation}.

Our contributions in this paper can be summarized as follows:
\begin{list}{{\arabic{enumi}.}}{\usecounter{enumi}
\setlength{\leftmargin}{11pt}
\setlength{\listparindent}{-1pt}}
\vspace*{-.7ex}
\item We provide a formal approach to carrying out intent preserving text perturbation backed up by formal privacy analysis (\Cref{sec:mechanism}). 
\item We provide a principled way to select the privacy parameter $\varepsilon$ for $d_{\chi}$-privacy on text data based on geometrical properties of word embeddings (\Cref{sec:ps-we}).
\item We conduct analysis on two embedding models, providing insights into words in the metric space (\Cref{sec:embeddings}). We also show how the vectors respond to perturbations, connecting the geometry of the embedding with statistics of the $d_{\chi}$-privacy mechanism.
\item We apply our mechanism to different experimental tasks, at different values of $\varepsilon$, 
demonstrating the trade-off between privacy and utility (\Cref{sec:ml_experiments}). 
\end{list}

\section{Privacy Preserving Mechanism}\label{sec:mechanism}
Consider a single word $x$ submitted by a user interacting with an information system. \Eg{} $x$ might represent a response to a survey request, or an elicitation of a fine grained sentiment.
In particular, $x$ will contain semantic information about the intent the user is trying to convey, but it also encodes an idiosyncratic representation of the user's word choices. Even though the word might not be explicitly \ac{PII} in the traditional sense of passwords and phone numbers, recent research has shown that the choice of words can serve as a \emph{fingerprint} \cite{bun2018fingerprinting} via which tracing attacks are launched \cite{songshmatikovkdd}.
Our goal is to produce $\hat{x}$, a version of $x$ that preserves the original intent while thwarting this attack vector.
We start the section by giving a high-level description of the rationale behind our mechanism and describing the threat model. Then we recall some fundamental concepts of $d_{\chi}$-privacy. Finally, we provide a detailed description of our mechanism, together with a formal statement of its privacy guarantees.

\subsection{Mechanism Overview}

We start by providing a high-level description of our mechanism. 
Our mechanism applies a $d_{\chi}$-privacy mechanism $\hat{x} = M(x)$ to obtain a replacement for the given word $x$. Such replacement is sampled from a carefully crafted probability distribution to ensure that $\hat{x}$ conveys a similar semantic to $x$ while at the same time hiding any information that might reveal the identity of the user who generated $x$. Intuitively, the randomness introduced by $d_{\chi}$-privacy provides plausible deniability \cite{bindschaedler2017plausible} with respect to the original content submitted by the user. However, it also permits a curator of the perturbed words to perform aggregate sentiment analysis, or to cluster survey results without significant loss of utility.

\subsection{Utility Requirements and Threat Model}

When designing our mechanism we consider a threat model where a \emph{trusted curator} collects a word from each user as $x^{(1)}, x^{(2)}, \ldots$ and wishes to make them available in clear form to an \emph{analyst} for use in some downstream tasks (such as clustering survey responses or building ML models). The data is collected in the `\emph{one user}, \emph{one word}' model, and we do not seek to extend theoretical protections to aggregate user data in this model.
Unfortunately, providing words in the clear presents the challenge of unwittingly giving the analyst access to information about the users interacting with the system. This could be either in the form of some shared side knowledge between the user and the analyst \cite{korolova2009releasing}, or through an ML attack to learn which users frequently use a given set of words \cite{shokri2017membership,songshmatikovkdd}.
Our working assumption is that the exact word is not necessary to effectively solve the downstream tasks of interest, although the general semantic meaning needs to be preserved to some extent; the experiments in \Cref{sec:ml_experiments} give several examples of this type of use case.
Thus, we aim to transform each submission by using randomization to provide plausible deniability over any potential identifiers.


\subsection{Privacy over Metric Spaces}\label{sec:mdp}
Over the last decade, \acf{DP} \cite{dwork2006calibrating} has emerged as a \defacto{} standard for privacy-preserving data analysis algorithms. 
Several variants of \ac{DP} have been proposed in the literature to address a variety of settings depending on whether, for example, privacy is defined with respect to aggregate statistics and \ac{ML} models (\emph{curator DP}) \cite{dwork2006calibrating}, or privacy is defined with respect to the data points contributed by each individual (\emph{local DP}) \cite{kasiviswanathan2011can}.

Since our application involves privatizing individual words submitted by each user, \ac{LDP} would be the ideal privacy model to consider. However, \ac{LDP} has a requirement that renders it impractical for our application: it requires that the given word $x$ has a non-negligible probability of being transformed into \emph{any} other word $\hat{x}$, no matter how unrelated $x$ and $\hat{x}$ are. Unfortunately, this constraint makes it virtually impossible to enforce that the \emph{semantics} of $x$ are approximately captured by the privatized word $\hat{x}$, since the space of words grows with the size of the language vocabulary, and the number of words semantically related to $x$ will have vanishingly small probability under \ac{LDP}.

To address this limitation we adopt $d_{\chi}$-privacy \cite{chatzikokolakis2013broadening,alvim2018local}, a relaxation of local \ac{DP} that originated in the context of location privacy to address precisely the limitation described above. In particular, $d_{\chi}$-privacy allows a mechanism to report a user's location in a privacy-preserving manner, while giving higher probability to locations which are close to the current location, and negligible probability to locations in a completely different part of the planet. $d_{\chi}$-privacy was originally developed as an abstraction of the model proposed in \cite{andres2013geo} to address the privacy-utility trade-off in location privacy. 

Formally, $d_{\chi}$-privacy is defined for mechanisms whose inputs come from a set $\cX$ equipped with a distance function $d : \cX \times \cX \to \R_+$ satisfying the axioms of a metric (\ie{} identity of indiscernibles, symmetry and triangle inequality). The definition of $d_{\chi}$-privacy depends on the particular distance function $d$ being used and it is parametrized by a privacy parameter $\varepsilon > 0$.
We say that a randomized mechanism $M : \cX \to \cY$ satisfies $\varepsilon d_{\chi}$-privacy if for any $x, x' \in \cX$ the distributions over outputs of $M(x)$ and $M(x')$ satisfy the following bound: for all $y \in \cY$ we have
\begin{align}\label{eqn:mdp}
\frac{\Pr[M(x) = y]}{\Pr[M(x') = y]} \leq e^{\varepsilon d(x,x')} \enspace.
\end{align}
We note that $d_{\chi}$-privacy exhibits the same desirable properties of \ac{DP} (\eg{} composition, post-processing, robustness against side knowledge, \etc{}), but we won't use these properties explicitly in our analysis; we refer the reader to \cite{chatzikokolakis2013broadening} for further details.

The type of probabilistic guarantee described by \eqref{eqn:mdp} is characteristic of \ac{DP}: it says that the log-likelihood ratio of observing any particular output $y$ given two possible inputs $x$ and $x'$ is bounded by $\varepsilon d(x,x')$. The key difference between $d_{\chi}$-privacy and local \ac{DP} is that the latter corresponds to a particular instance of the former when the distance function is given by $d(x,x') = 1$ for every $x \neq x'$. Unfortunately, this Hamming metric does not provide a way to classify some pairs of points in $\cX$ as being closer than others. This indicates that local \ac{DP} implies a strong notion of indistinguishability of the input, thus providing very strong privacy by ``remembering almost nothing'' about the input. Fortunately, $d_{\chi}$-privacy is less restrictive and allows the indistinguishability of the output distributions to be scaled by the distance between the respective inputs. In particular, the further away a pair of inputs are, the more distinguishable the output distributions can be, thus allowing these distributions to remember more about their inputs than under the strictly stronger definition of local \ac{DP}.
An inconvenience of $d_{\chi}$-privacy is that the meaning of the privacy parameter $\varepsilon$ changes if one considers different metrics, and is in general incomparable with the $\varepsilon$ parameter used in standard (local) \ac{DP} (which can lead to seemingly larger privacy budget values as the dimensionality of the metric space increases). As a result, this paper makes no claim to provide privacy guarantees in the traditional sense of classical \ac{DP}. Thus, in order to understand the privacy consequences of a given $\varepsilon$ in $d_{\chi}$-privacy one needs to understand the structure of the underlying metric $d$. For now we assume $\varepsilon$ is a parameter given to the mechanism; we will return to this point in \Cref{sec:ps-we} where we analyze the meaning of this parameter for metrics on words derived from embeddings. All the metrics described in this work are Euclidean. For discussions on $d_{\chi}$-privacy over other metrics (such as Manhattan and Chebyshev, see \cite{chatzikokolakis2013broadening})


\subsection{Method Details}
\label{mechanism_details}

We now describe the proposed $d_{\chi}$-privacy mechanism.
The full mechanism takes as input a string $x$ containing $|x|$ words and outputs a string $\hat{x}$ of the same length.
To privatize $x$ we use a $d_{\chi}$-privacy mechanism $M : \cX \to \cX$, where $\cX = \cW^{\ell}$ is the space of all strings of length $\ell$ with words in a dictionary $\cW$. The metric between strings that we consider here is derived from a \emph{word embedding model} $\phi : \cW \to \R^n$ as follows: given $x, x' \in \cW^{\ell}$ for some $\ell \geq 1$, we let $d(x,x') = \sum_{i = 1}^{\ell} \norm{\phi(w_i) - \phi(w_i')}$,
where $w_i$ (resp.\ $w_i'$) denotes the $i$th word of $x$ (resp.\ $x'$),  and $\norm{\cdot}$ denotes the Euclidean norm on $\R^n$. Note that $d$ satisfies all the axioms of a metric as long as the word embedding $\phi$ is injective. We also assume the word embedding is independent of the data to be privatized; \eg{} we could take an available word embedding like GloVe \cite{pennington2014glove} or train a new word embedding on an available dataset.
Our mechanism $M$ works by computing the embedding $\phi(w)$ of each word $w \in x$, adding some properly calibrated random noise $N$ to obtain a perturbed embedding $\hat{\phi} = \phi(w) + N$, and then replacing the word $w$ with the word $\hat{w}$ whose embedding is closest to $\hat{\phi}$. The noise $N$ is sampled from an $n$-dimensional distribution with density $p_N(z) \propto \exp(- \varepsilon \norm{z})$, where $\varepsilon$ is the privacy parameter of the mechanism. The following pseudo-code provides implementation details for our mechanism. 

\begin{algorithm}[h]
\small
\DontPrintSemicolon
\SetKw{KwRelease}{release}
\caption{Privacy Preserving Mechanism}\label{alg:madlib}
\KwIn{string $x = w_1 w_2 \cdots w_{\ell}$, privacy parameter $\varepsilon > 0$}
\For{$i \in \{1, \ldots, \ell\}$}{
Compute embedding $\phi_i = \phi(w_i)$\;
Perturb embedding to obtain $\hat{\phi}_i = \phi_i + N$ with noise density $p_N(z) \propto \exp(- \varepsilon \norm{z})$\;
Obtain perturbed word $\hat{w}_i = \argmin_{u \in \cW} \norm{\phi(u) - \hat{\phi}_i}$\;
Insert $\hat{w}_i$ in $i$th position of $\hat{x}$\;
}
\KwRelease{$\hat{x}$}
\end{algorithm}

See \Cref{sec:how_sample} for details on how to sample noise from the multivariate distribution $p_N$ for different values of $\varepsilon$.

\subsection{Privacy Proof}
The following result states that our mechanism $M$ satisfies $\varepsilon d_{\chi}$-privacy with respect to the metric $d$ defined above. 

\begin{theorem}\label{thm:M-mdp}
For any $\ell \geq 1$ and any $\varepsilon > 0$, the mechanism $M : \cW^{\ell} \to \cW^{\ell}$ satisfies $\varepsilon d_{\chi}$-privacy with respect to $d$.
\end{theorem}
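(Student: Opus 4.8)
The plan is to reduce the claim to a single-word statement and then establish that via a pointwise density-ratio bound. First I would exploit the fact that \Cref{alg:madlib} perturbs each of the $\ell$ words independently. Writing $y = \hat{w}_1 \cdots \hat{w}_\ell$ and letting $M_1 : \cW \to \cW$ denote the single-word map obtained from one pass of the loop body, independence of the noise across coordinates gives
\begin{align*}
\Pr[M(x) = y] = \prod_{i=1}^{\ell} \Pr[M_1(w_i) = \hat{w}_i] \enspace.
\end{align*}
Since the metric factorizes as $d(x,x') = \sum_{i=1}^{\ell} \norm{\phi(w_i) - \phi(w_i')}$, it suffices to prove the single-word bound $\Pr[M_1(w) = u] \leq e^{\varepsilon \norm{\phi(w) - \phi(w')}} \, \Pr[M_1(w') = u]$ for every $w, w', u \in \cW$; multiplying these $\ell$ inequalities then produces exactly the factor $e^{\varepsilon \sum_i \norm{\cdot}} = e^{\varepsilon d(x,x')}$ required by \eqref{eqn:mdp}.

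Next I would make the single-word output probability explicit. Given input $w$, the perturbed embedding $\hat{\phi} = \phi(w) + N$ has density $z \mapsto p_N(z - \phi(w))$ on $\R^n$, and $M_1$ returns $u$ exactly when $\hat{\phi}$ lands in the Voronoi cell $C_u = \{ z \in \R^n : u = \argmin_{v \in \cW} \norm{\phi(v) - z} \}$ associated with $\phi(u)$. Hence
\begin{align*}
\Pr[M_1(w) = u] = \int_{C_u} p_N(z - \phi(w)) \, dz \enspace.
\end{align*}
The cell boundaries, where the $\argmin$ is not unique, have Lebesgue measure zero, so the tie-breaking rule does not affect this integral.

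The heart of the argument is a pointwise bound on the integrand ratio. Writing $p_N(z) = \frac{1}{Z_\varepsilon} \exp(-\varepsilon \norm{z})$, the normalization constant $Z_\varepsilon$ depends only on $\varepsilon$ and $n$ (not on the center), so it cancels and leaves
\begin{align*}
\frac{p_N(z - \phi(w))}{p_N(z - \phi(w'))} = \exp\bigl( \varepsilon ( \norm{z - \phi(w')} - \norm{z - \phi(w)} ) \bigr) \leq \exp\bigl( \varepsilon \norm{\phi(w) - \phi(w')} \bigr) \enspace,
\end{align*}
where the inequality is the reverse triangle inequality. This step is the crux, and it is precisely why the noise density is chosen proportional to $\exp(-\varepsilon \norm{z})$: the Euclidean norm in the exponent is exactly what lets the reverse triangle inequality convert a difference of distances to the two centers into the quantity $\norm{\phi(w) - \phi(w')}$ appearing in the metric. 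Integrating this uniform pointwise bound over $C_u$ then yields $\Pr[M_1(w) = u] \leq e^{\varepsilon \norm{\phi(w) - \phi(w')}} \Pr[M_1(w') = u]$, which completes the single-word case and hence the theorem.

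I expect the only real subtlety to be bookkeeping rather than a genuine obstacle: one must confirm that the continuous-to-discrete step (the $\argmin$ rounding) is correctly handled by integrating the continuous noise density over Voronoi cells, and that the measure-zero ties on cell boundaries are harmless. The inequality itself is elementary once the density ratio is written out, and the factorization over words is immediate from the independence of the per-word noise draws.
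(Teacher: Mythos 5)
Your proposal is correct and follows essentially the same route as the paper's proof: the same Voronoi-cell representation of $\Pr[M(w) = \hat{w}]$ as an integral of the shifted noise density, the same pointwise (reverse) triangle-inequality bound on the density ratio with cancellation of the normalization constant, and the same product-over-words factorization combined with $d(x,x') = \sum_i d(w_i, w_i')$ to lift the single-word bound to $\ell > 1$. The only difference is cosmetic (you present the reduction to $\ell = 1$ first, the paper does it last) plus your extra remark that ties on cell boundaries have measure zero, a point the paper leaves implicit.
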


\begin{proof}
The intuition behind the proof is to observe that $M$ can be viewed as a combination of the generic exponential mechanism construction for the metric $d$ together with a post-processing strategy that does not affect the privacy guarantee of the exponential mechanism. However, we chose not to formalize our proof in those terms; instead we provide a self-contained argument leading to a more direct proof without relying on properties of $d_{\chi}$-privacy established elsewhere.

To start the proof, we first consider the case $\ell = 1$ so that $x = w \in \cW$ and $x' = w' \in \cW$ are two inputs of length one. For any possible output word $\hat{w} \in \cW$ we define a set $C_{\hat{w}} \subset \R^n$ containing all the feature vectors which are closer to the embedding $\phi(\hat{w})$ than to the embedding of any other word. Formally, we have
\begin{align*}
C_{\hat{w}} = \left\{ z \in \R^n \;:\; \norm{z - \phi(\hat{w})} < \min_{u \in \cW \setminus \{\hat{w}\}} \norm{z - \phi(u)} \right\} \enspace.
\end{align*}
The set $C_{\hat{w}}$ is introduced because it is directly related to the probability that the mechanism $M$ on input $x = w$ produces $\hat{w}$ as output. Indeed, by the description of $M$ we see that we get $M(w) = \hat{w}$ if and only if the perturbed feature vector $\hat{\phi} = \phi(w) + N$ is closer to $\phi(\hat{w})$ than to the embedding of any other word in $\cW$. In particular, letting $p_{\phi(w) + N}(z)$ denote the density of the random variable $\phi(w) + N$, we can write the probability of this event as follows:
\begin{align*} 
\Pr[M(w) &= \hat{w}]
=
\Pr[ \phi(w) + N \in C_{\hat{w}} ]
=
\int_{C_{\hat{w}}} p_{\phi(w) + N}(z) dz \\
&=
\int_{C_{\hat{w}}} p_{N}(z - \phi(w)) dz
\propto
\int_{C_{\hat{w}}} \exp(- \varepsilon \norm{z - \phi(w)}) dz \enspace,
\end{align*}
where we used that $\phi(w) + N$ has exactly the same distribution of $N$ but with a different mean.
Now we note that the triangle inequality for the norm $\norm{\cdot}$ implies that for any $z \in \R^n$ we have the following inequality:
\begin{align*}
\exp(- \varepsilon \norm{z - \phi(w)})
&=
\frac{\exp(- \varepsilon \norm{z - \phi(w)})}{\exp(- \varepsilon \norm{z - \phi(w')})} \exp(- \varepsilon \norm{z - \phi(w')}) \\
&=
\exp(\varepsilon (\norm{z - \phi(w')} - \norm{z - \phi(w)})) \\ 
& \;\; \times \;
\exp(- \varepsilon \norm{z - \phi(w')}) \\
&\leq
\exp(\varepsilon \norm{\phi(w) - \phi(w')} ) \exp(- \varepsilon \norm{z - \phi(w')}) \\
&=
\exp(\varepsilon d(w,w') ) \exp(- \varepsilon \norm{z - \phi(w')}) \enspace.
\end{align*}
Combining the last two derivations and observing the the normalization constants in $p_N(z)$ and $p_{\phi(w) + N}(z)$ are the same, we obtain
\begin{align*}
\frac{\Pr[M(w) = \hat{w}]}{\Pr[M(w') = \hat{w}]}
&=
\frac{\int_{C_{\hat{w}}} \exp(- \varepsilon \norm{z - \phi(w)}) dz}{\int_{C_{\hat{w}}} \exp(- \varepsilon \norm{z - \phi(w')}) dz}
\leq
\exp(\varepsilon d(w,w') ) \enspace.
\end{align*}
Thus, for $\ell = 1$ the mechanism $M$ is $\varepsilon d_{\chi}$-privacy preserving.

Now we consider the general case $\ell > 1$. We claim that because the mechanism treats each word in $x = w_1 \cdots w_{\ell}$ independently, the result follows directly from the analysis for the case $\ell = 1$. To see this, we note the following decomposition allows us to write the output distribution of the mechanism on strings of length $\ell > 1$ in terms of the output distributions of the mechanism on strings of length one: for $x, \hat{x} \in \cW^\ell$ we have
\begin{align*}
\Pr[M(x) = \hat{x}] 
&= 
\prod_{i = 1}^\ell \Pr[M(w_i) = \hat{w}_i] \enspace.
\end{align*}
Therefore, using that $M$ is $d_{\chi}$-privacy preserving with respect to $d$ on strings of length one, we have that for any pair of inputs $x, x' \in \cW^{\ell}$ and any output $\hat{x} \in \cW^{\ell}$ the following is satisfied:
\begin{align*}
\frac{\Pr[M(x) = \hat{x}]}{\Pr[M(x') = \hat{x}]}
&=
\prod_{i = 1}^{\ell} \left(
\frac{Pr[M(w_i) = \hat{w}_i]}{Pr[M(w_i') = \hat{w}_i]}
\right) \\
&
\leq
\prod_{i = 1}^{\ell} \exp(\varepsilon d(w_i,w_i'))
=
\exp(\varepsilon d(x,x')) \enspace,
\end{align*}
where we used that the definition of $d$ is equivalent to $d(x,x') = \sum_{i = 1}^\ell d(w_i,w_i')$. The result follows.
\end{proof}

\subsection{Sampling from the Noise Distribution}
\label{sec:how_sample}
To sample from $p_N$, first, we sample a vector-valued random variable $\mathbf{v} = [v_1 \ldots v_n]$ from the multivariate normal distribution:
\begin{align*}
p(x; \mu, \Sigma) = \frac{1}{(2 \pi)^{n/2}|\Sigma|^{1/2}} \exp \Big( -\frac{1}{2}(x - \mu)^T \Sigma^{-1}(x - \mu)  \Big) .
\end{align*}
where $n$ is the dimensionality of the word embedding, the mean $\mu$ is centered at the origin and the covariance matrix $\Sigma$ is the identity matrix. The vector $\mathbf{v}$ is then normalized to constrain it in the unit ball. Next, we sample a magnitude $l$ from the Gamma distribution  
\begin{align*}
p(x; n, \theta) = \frac{x^{n - 1} e^{-x/\theta}}{\Gamma(n) \theta^n} .
\end{align*}
where $\theta = 1/\varepsilon$ and $n$ is the embedding dimensionality. A sample noisy vector at the privacy parameter $\varepsilon$ is therefore output as $l \mathbf{v}$.
More details on the approach can be found in \cite[Appendix E]{wu2017bolt}.

\section{Statistics for Privacy Calibration}\label{sec:ps-we}

In this section we present a methodology for calibrating the $\varepsilon$ parameter of our $d_{\chi}$-privacy mechanism $M$ based on the geometric structure of the word embedding $\phi$ used to define the metric $d$. Our strategy boils down to identifying a small number of statistics associated with the output distributions of $M$, and finding a range of parameters $\varepsilon$ where these statistics behave as one would expect from a mechanism providing a prescribed level of plausible deniability. We recall that the main reason this is necessary, and why the usual rules of thumb for calibrating $\varepsilon$ in traditional (\ie{} hamming distance based) \ac{DP} cannot be applied here, is because the meaning of $\varepsilon$ in $d_{\chi}$-privacy depends on the particular metric being used and is not transferable across metrics.
We start by making some qualitative observations about how $\varepsilon$ affects the behavior of mechanism $M$. For the sake of simplicity we focus the discussion on the case where $x$ is a single word $x = w$, but all our observations can be directly generalized to the case $|x| > 1$.

\subsection{Qualitative Observations}

The first observation is about the behavior at extreme values of $\varepsilon$. As $\varepsilon \to 0$ we have $M(w)$ converging to the a fixed distribution over $\cW$ independent of $w$. This distribution will not be uniform across $\cW$ since the probability $\lim_{\varepsilon \to 0} \Pr[M(w) = \hat{w}]$ will depend on the relative size of the event $C_{\hat{w}}$ defined in the proof of \Cref{thm:M-mdp}. However, since this distribution is the same regardless of the input $w$, we see that $\varepsilon \to 0$ provides absolute privacy as the output produced by the mechanism becomes independent of the input word. Such a mechanism will not provide preserve semantics as the output is essentially random.
In contrast, the regime $\varepsilon \to \infty$ will yield a mechanism satisfying $M(w) = w$ for all inputs, thus providing null privacy, but fully preserving the semantics.  
As expected, by tuning the privacy parameter $\varepsilon$ we can trade-off privacy \vs{} utility.
Utility for our mechanism comes in the form of some semantic-preserving properties; we will measure the effect of $\varepsilon$ on the utility when we use the outputs in the context of a \ac{ML} pipeline in \Cref{sec:ml_experiments}.
Here we focus on exploring the effect of $\varepsilon$ on the privacy provided by the mechanism, so as to characterize the minimal values of the parameter that yield acceptable privacy guarantees.

Our next observation is that for any finite $\varepsilon$, the distribution of $M(w)$ has full support on $\cW$. In other words, for any possible output $\hat{w} \in \cW$ we have a non-zero probability that $M(w) = \hat{w}$. However, we know from our discussion above that for $\hat{w} \neq w$ these probabilities vanish as $\varepsilon \to \infty$. A more precise statement can be made if one tries to compare the rate at which the probabilities $\Pr[M(w) = \hat{w}]$ for different outputs $\hat{w}$. In particular, given two outputs with $d(w, \hat{w}) \ll d(w, \hat{w}')$, by the definition of $M$ we will have $\Pr[M(w) = \hat{w}] \gg \Pr[M(w) = \hat{w}']$ for any fixed $\varepsilon$. Thus, taking the preceding observation and letting $\varepsilon$ grow, one obtains that $\Pr[M(w) = \hat{w}]$ goes to zero much faster for outputs $\hat{w}$ far from $w$ than for outputs close to it.
We can see from this argument that, essentially, as $\varepsilon$ grows, the distribution of $M(w)$ concentrates around $w$ and the words close to $w$. This is good from a utility point of view -- words close to $w$ with respect to the metric $d$ will have similar meanings by the construction of the embeddings -- but too much concentration degrades the privacy guarantee since it increases the probability $\Pr[M(w) = w]$ and makes the effective support of the distribution of $M(w)$ too small to provide plausible deniability.

\subsection{Plausible Deniability Statistics}
\label{sec:plausible_deny}

Inspired by the discussion above, we define two statistics to measure the amount of plausible deniability provided by a choice of the privacy parameter $\varepsilon$. 
Roughly speaking, in the context of text redaction applications, plausible deniability measures the likelihood of making correct inferences about the input given a sample from the privatization mechanism.
In this sense, plausible deniability can be achieved by making sure the original word has low probability of being released unperturbed, and additionally making sure that the words that are frequently sampled given some input word induce enough variation on the sample to hide which what the input word was.
A key difference between \ac{LDP} and $d_{\chi}$-privacy is that the former provides a stronger form of plausible deniability by insisting that almost every outcome is possible when a word is perturbed, while the later only requires that we give enough probability mass to words close to the original one to ensure that the output does not reveal what the original word was, although it still releases information about the neighborhood where the original word was.

More formally, the statistics we look at are the probability $N_w = \Pr[M(w) = w]$ of not modifying the input word $w$, and the (effective) support of the output distribution $S_w$ (\ie{} number of possible output words) for an input $w$. In particular, given a small probability parameter $\eta > 0$, we define $S_w$ as the size of the smallest set of words that accumulates probability at least $1 - \eta$ on input $w$:
\begin{align*}
S_w = \min | \{ S \subseteq \cX \; : \; \Pr[M(w) \notin S] \leq \eta \} | \enspace.
\end{align*}
Intuitively, a setting of $\varepsilon$ providing plausible deniability should have $N_w$ small and $S_w$ large for (almost) all words in $w \in \cW$.

These statistics can also be related to the two extremes of the R{\'e}nyi entropy \cite{renyi1961measures}, thus providing an additional information-theoretic justification for the settings of $\varepsilon$ that provide plausible deniability in terms of large entropy. Recall that for a distribution $p$ over $\cW$ with $p_w = \Pr_{W \sim p}[W = w]$, the R{\'e}nyi entropy of order $\alpha \geq 0$ is
\begin{align*}
H_{\alpha}(p) = \frac{1}{1 - \alpha} \log \left( \sum_{w \in \cW} p_w^{\alpha} \right) \enspace.
\end{align*}
By taking the extremes $\alpha = 0$ and $\alpha = \infty$ one obtains the so-called min-entropy $H_0(p) = \log | \supp(p) |$ and max-entropy $H_{\infty}(p) = \log(1 / \max_w p_w)$, where $\supp(p) = \{ w : p_w > 0 \}$ denotes the support of $p$.
This implies that we can see the quantities $S_w$ and $N_w$ as proxies for the two extreme R{\'e}nyi entropies through the approximate identities $H_0(M(w)) \approx \log S_w$ and $H_{\infty}(M(w)) \approx \log 1 / N_w$, where the last approximation relies on the fact that (at least for small enough $\varepsilon$), $w$ should be the most likely word under the distribution of $M(w)$. Making these two quantities large amounts to increasing the entropy of the distribution. In practice, we prefer to work with the statistics $S_w$ and $N_w$ than with the extreme R{\'e}nyi entropies since the former are easier to estimate through simulation.

\section{Analysis of Word Embeddings}\label{sec:embeddings}

A word embedding $\phi : \cW \to \R^n$ maps each word in some vocabulary to a vector of real numbers.
An approach for selecting the model parameters is to posit a conditional probability $p(o | w)$ of observing a word $o$ given a nearby word or context $w$ by taking the soft-max over all contexts in the vocabulary as: 
\cite{goldberg2014word2vec}.
\begin{align*}
p(o | w) = \frac{\exp\left({\phi(o)}^\top \phi(w)\right)}{\sum_{w' \in \cW} \exp\left({\phi(w')}^\top \phi(w)\right)}
\end{align*}
Such models are usually trained using a skip-gram objective \cite{mikolov2013distributed} to maximize the average log probability of words $w$ given the surrounding words as a context window of size $m$ scans through a large corpus of words $w_1, \ldots, w_T$: $ \frac{1}{T} \sum_{t = 1}^{T} \sum_{-m \leq j \leq m, j \neq 0} \log p (w_{t+j} | w_t)$.

The geometry of the resulting embedding model has a direct impact on defining the output distribution of our redaction mechanism.
To get an intuition for the structure of these metric spaces -- \ie{}, how words cluster together and the distances between words and their neighbors -- we ran several analytical experiments on two widely available word embedding models: \textsc{GloVe} \cite{pennington2014glove} and \textsc{fastText} \cite{bojanowski2016enriching}.
We selected $319,000$ words that were present in both the \textsc{GloVe} and \textsc{fastText} embeddings. 
Though we present findings only from the common $319,000$ words in the embedding vocabularies, we carried out experiments over the entire vector space (\ie{}, $400,000$ for \textsc{GloVe} and $2,519,370$ for \textsc{fastText}).

Our experiments provide: (i) insights into the distance $d(x,x')$ that controls the privacy guarantees of our mechanism for different embedding models detailed below; and (ii) empirical evaluation of the plausible deniability statistics $S_w$ and $N_w$ described in \Cref{sec:word_distribution} for the mechanisms obtained using different embeddings.

We analyzed the distance between each of the $319,000$ words and its $k$ closest neighbors. The $k$ values were $1, 5, 10, 20, 50, 100, 200, 500$, and $1000$. We computed the Euclidean distance between each word vector and its $k$ neighbors. We then computed $5$th, $20$th, $50$th, $80$th, and $95$th percentile of the distances for each of the $k$ values. The line chart in \Cref{fig:emb_summary} summarizes the results across the percentiles values by presenting a logarithmic view of the increasing $k$ values. 

\begin{figure}[h]
\begin{subfigure}[t]{\linewidth}
    \includegraphics[width=.47\linewidth]{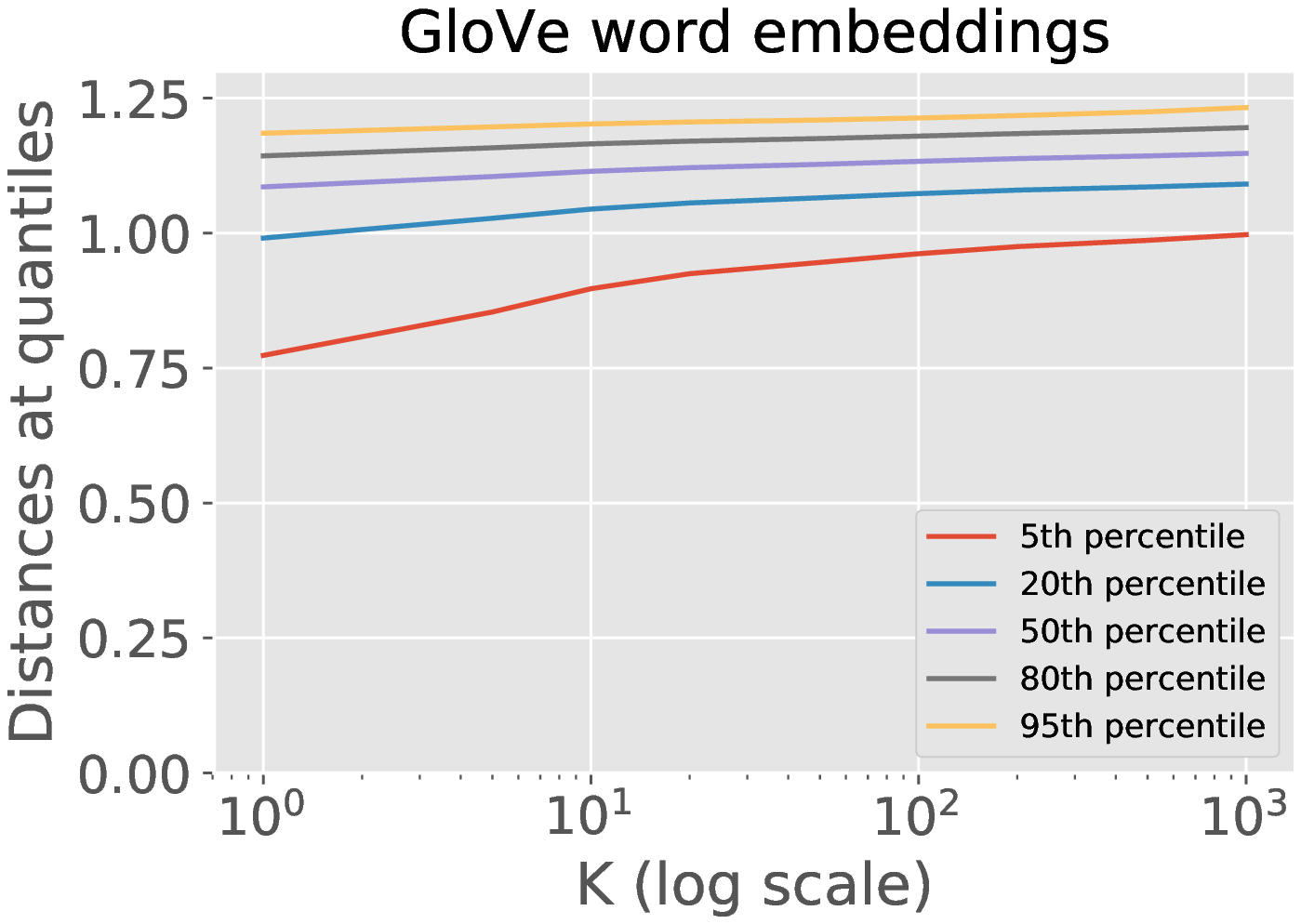}
    \includegraphics[width=.47\linewidth]{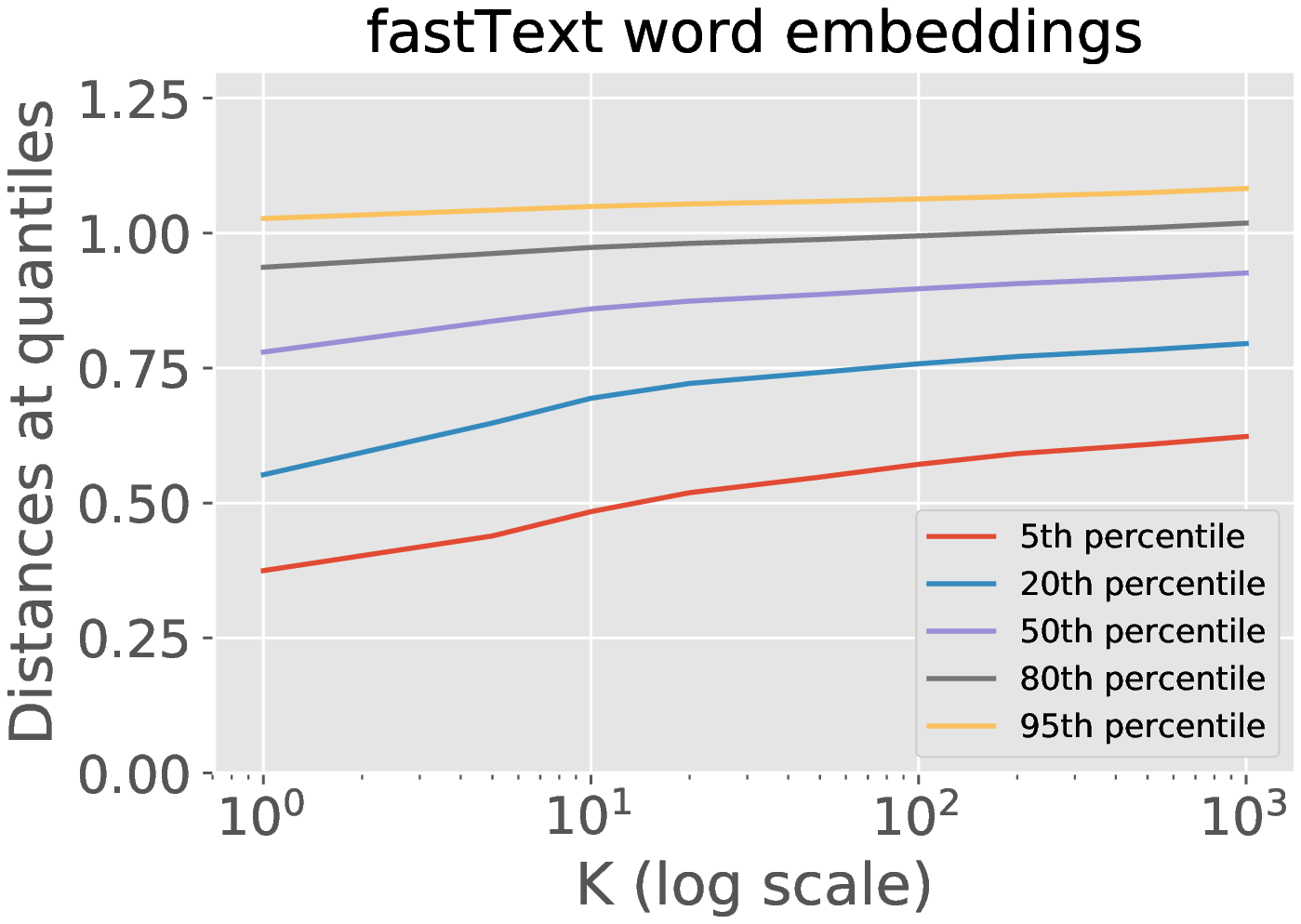}
\end{subfigure}
\caption{Distribution of distances between a given vector and its $k$ closest neighbors for \textsc{GloVe} and \textsc{fastText}}
\label{fig:emb_summary}
\end{figure}

The line plot results in \Cref{fig:emb_summary} give insights into how different embedding models of the same vector dimension can have different distance distributions. The words in \textsc{fastText} have a smoother distance distribution with a wider spread across percentiles. 

\begin{figure*}
\begin{subfigure}[t]{\textwidth}
    \includegraphics[width=.205\linewidth]{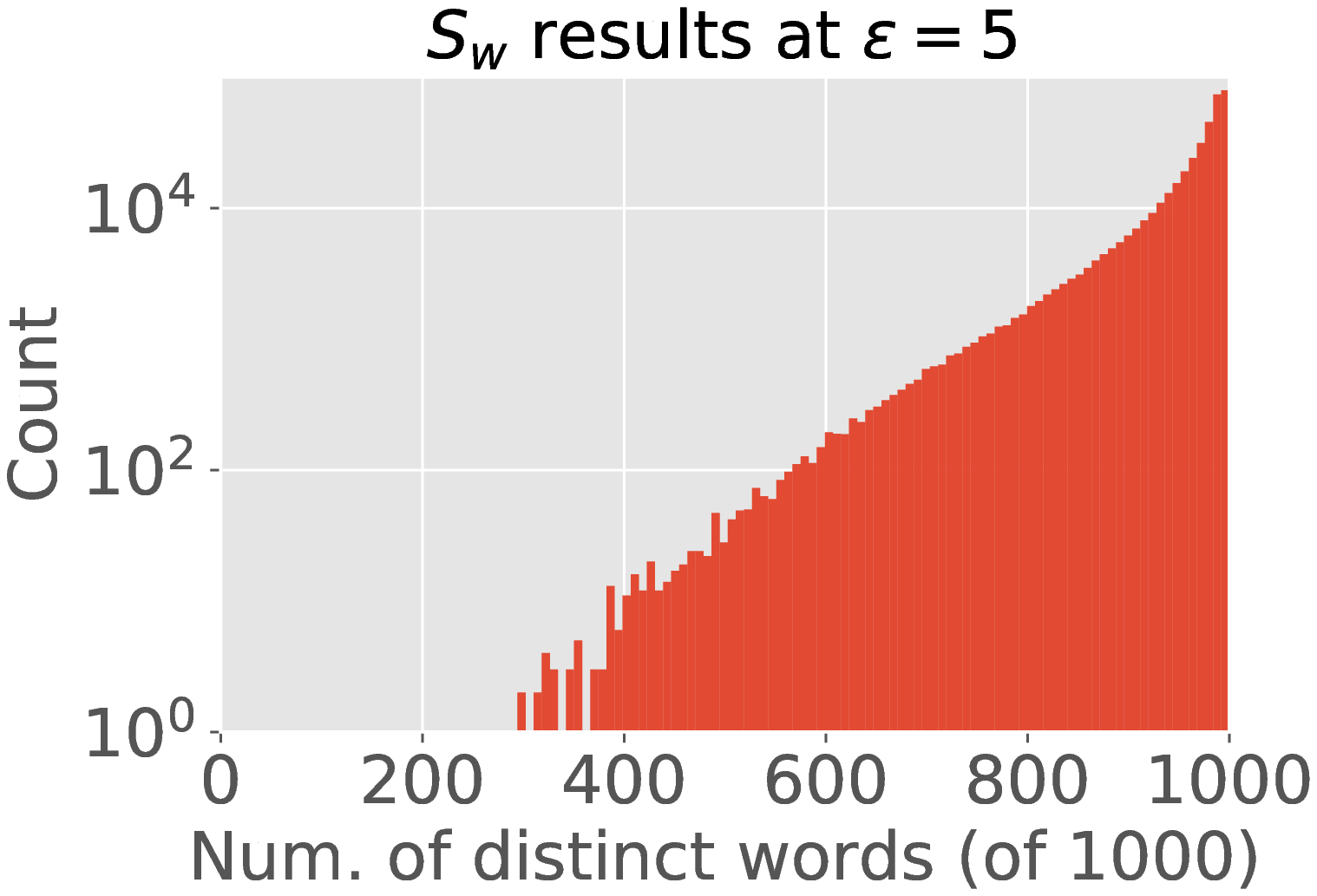}
    \includegraphics[width=.19\linewidth]{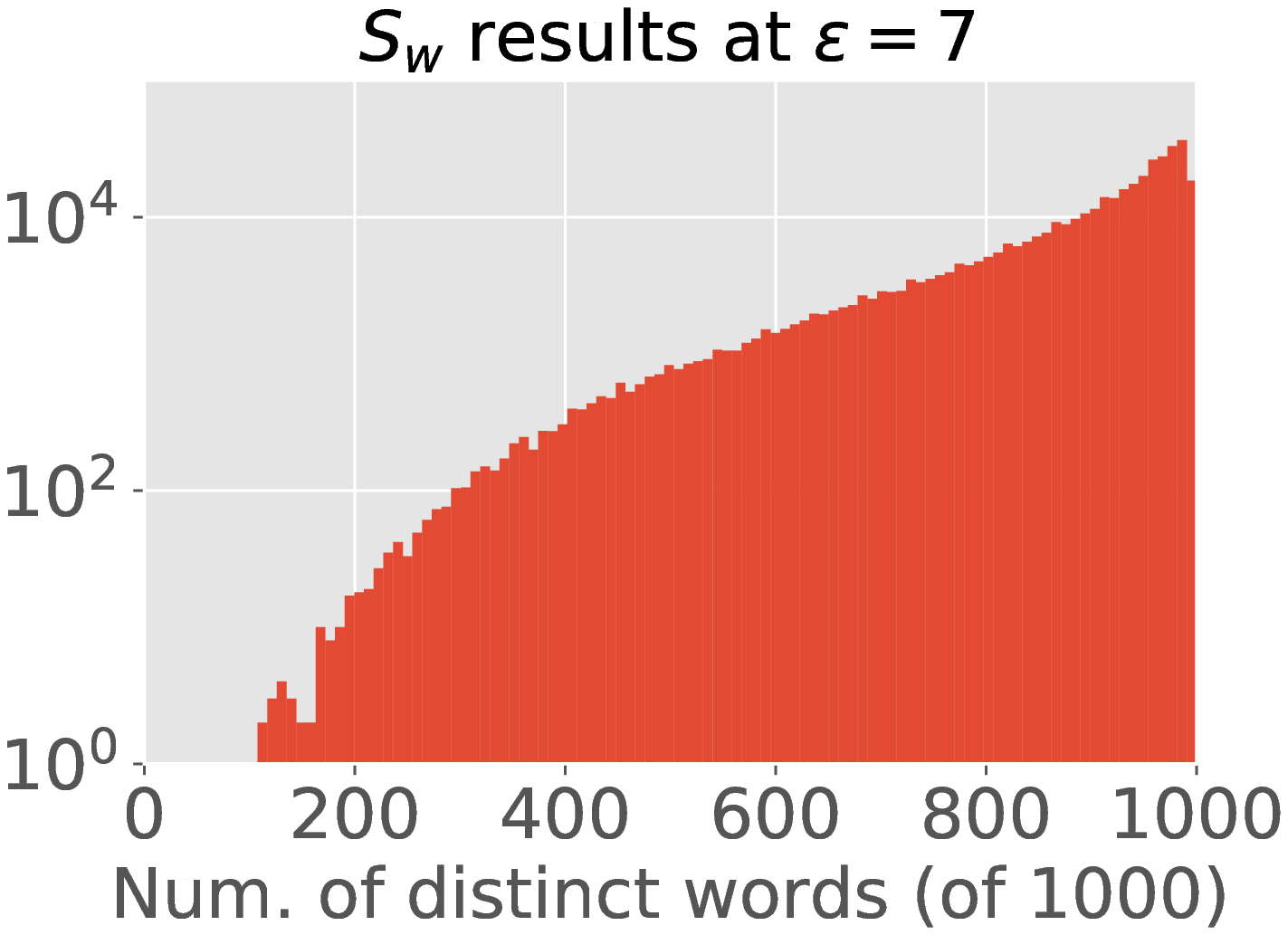}
    \includegraphics[width=.19\linewidth]{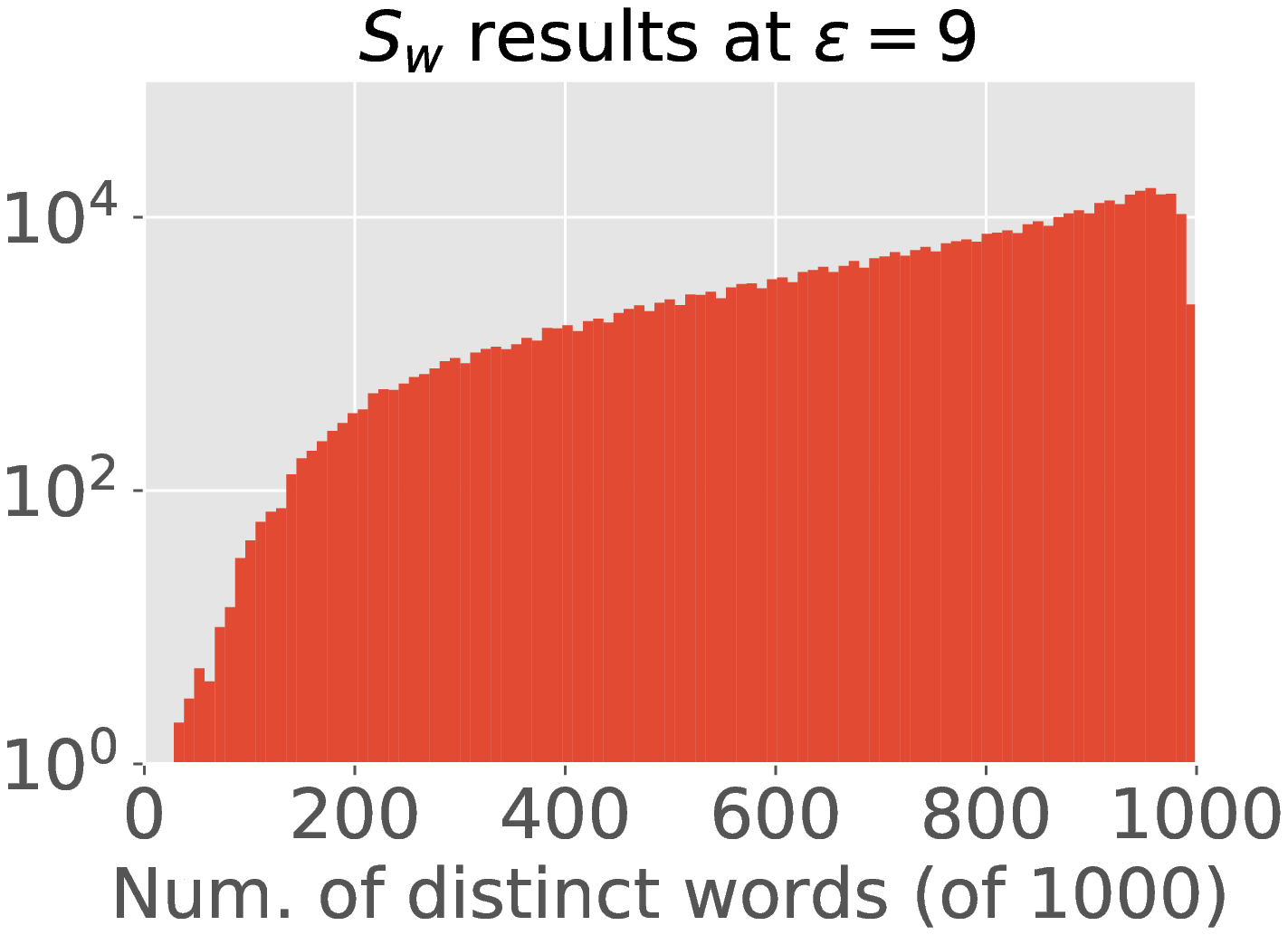}
    \includegraphics[width=.19\linewidth]{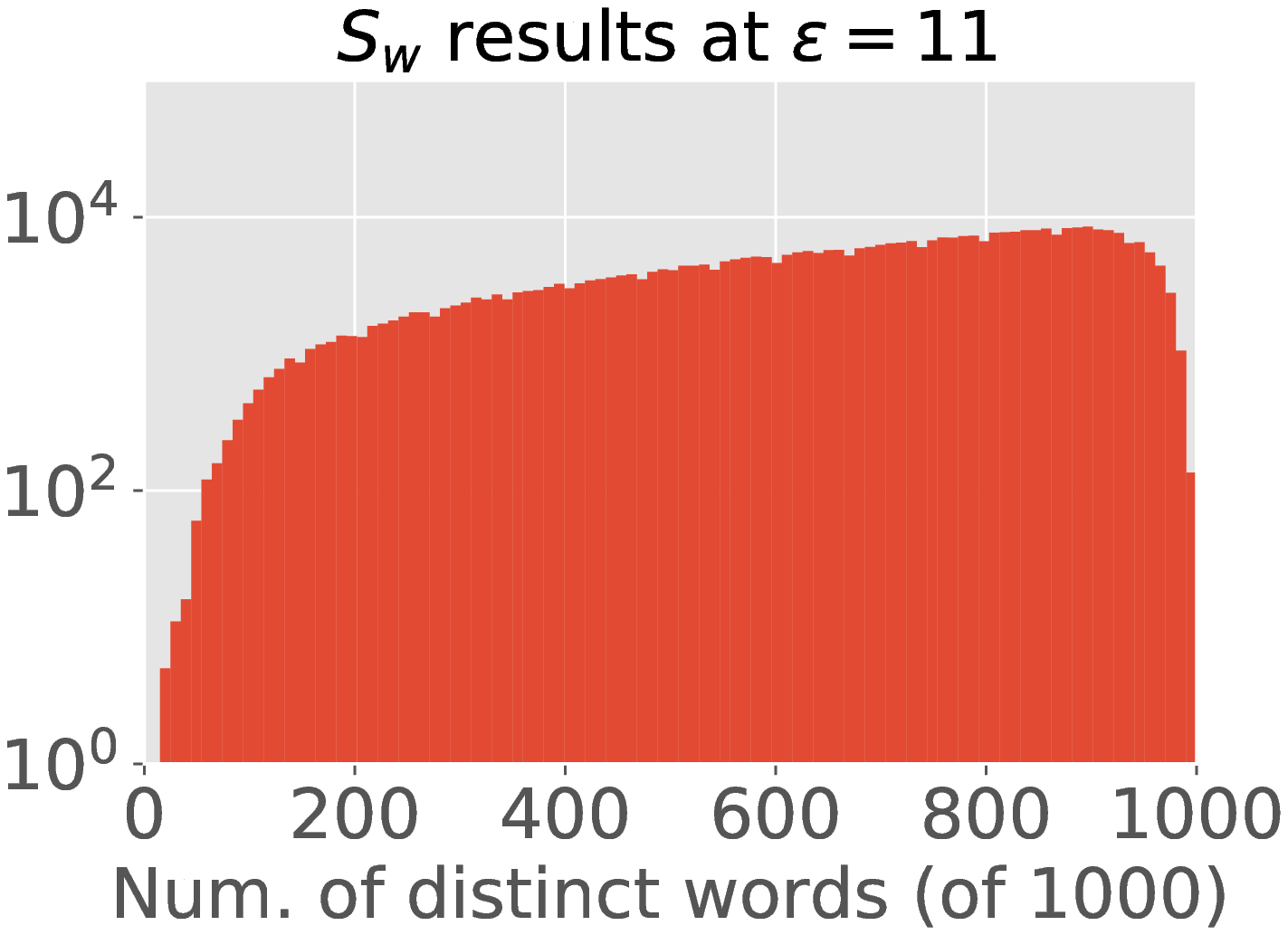}
    \includegraphics[width=.19\linewidth]{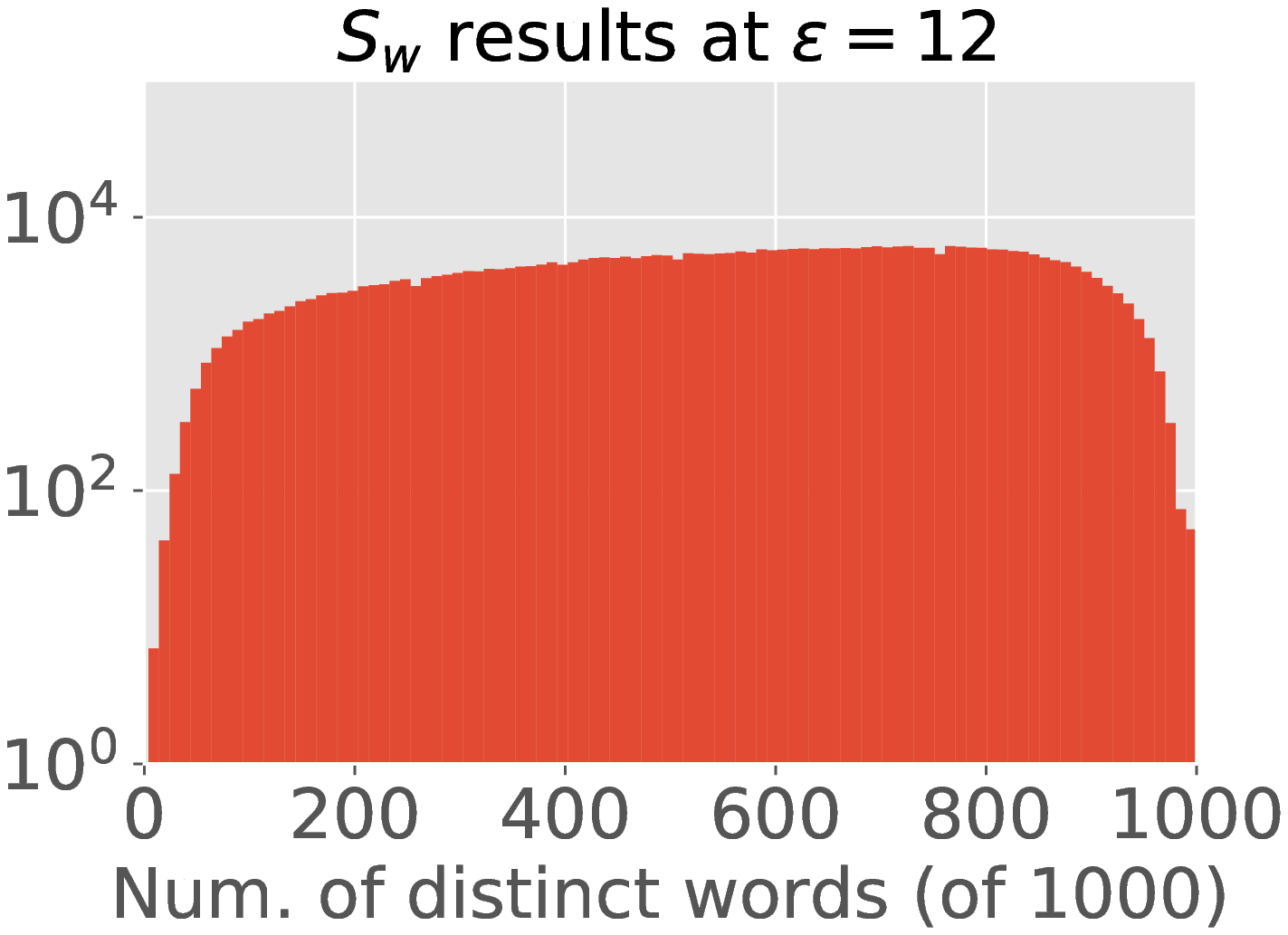}
\end{subfigure}
\begin{subfigure}[t]{\textwidth}
    \includegraphics[width=.205\linewidth]{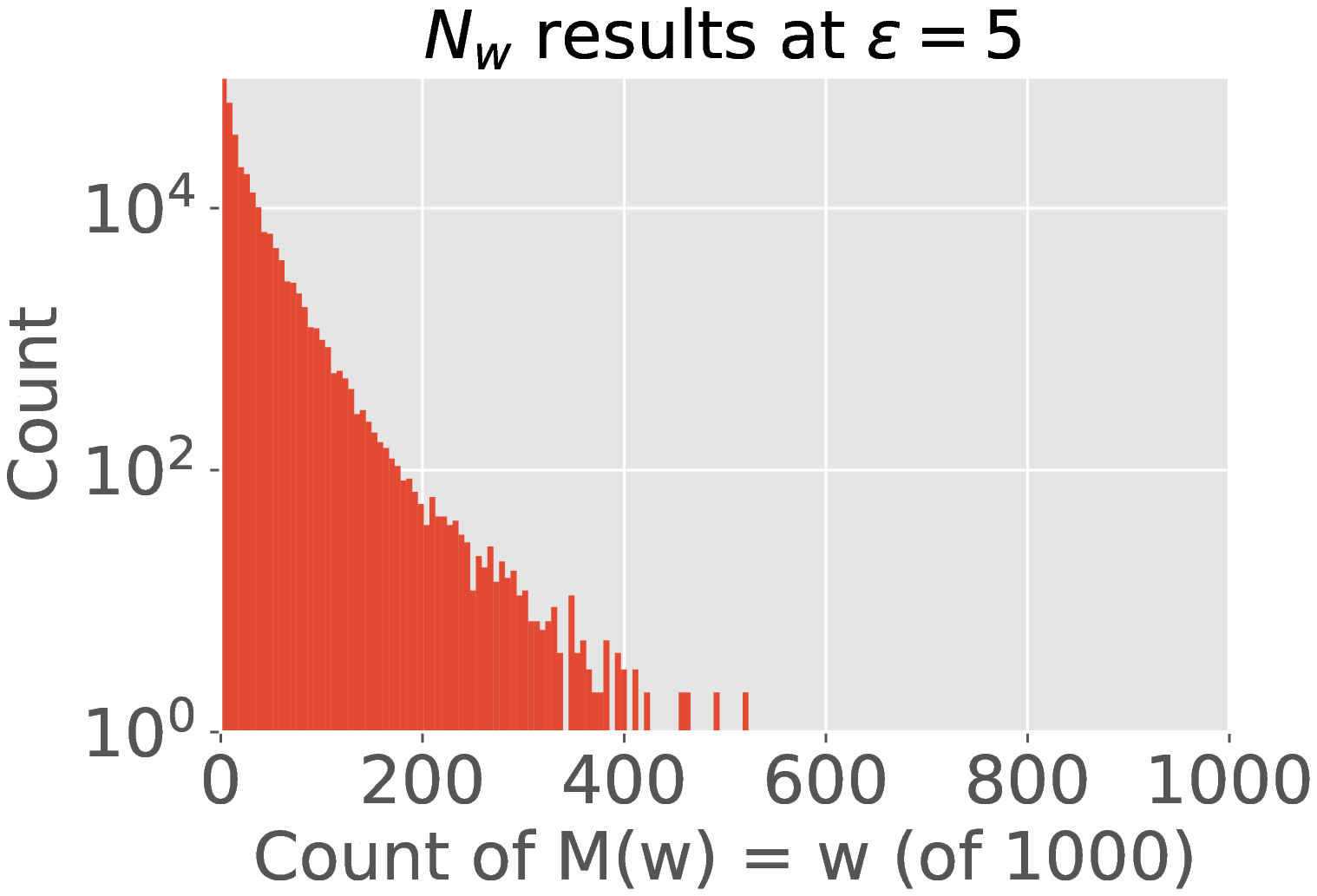}
    \includegraphics[width=.19\linewidth]{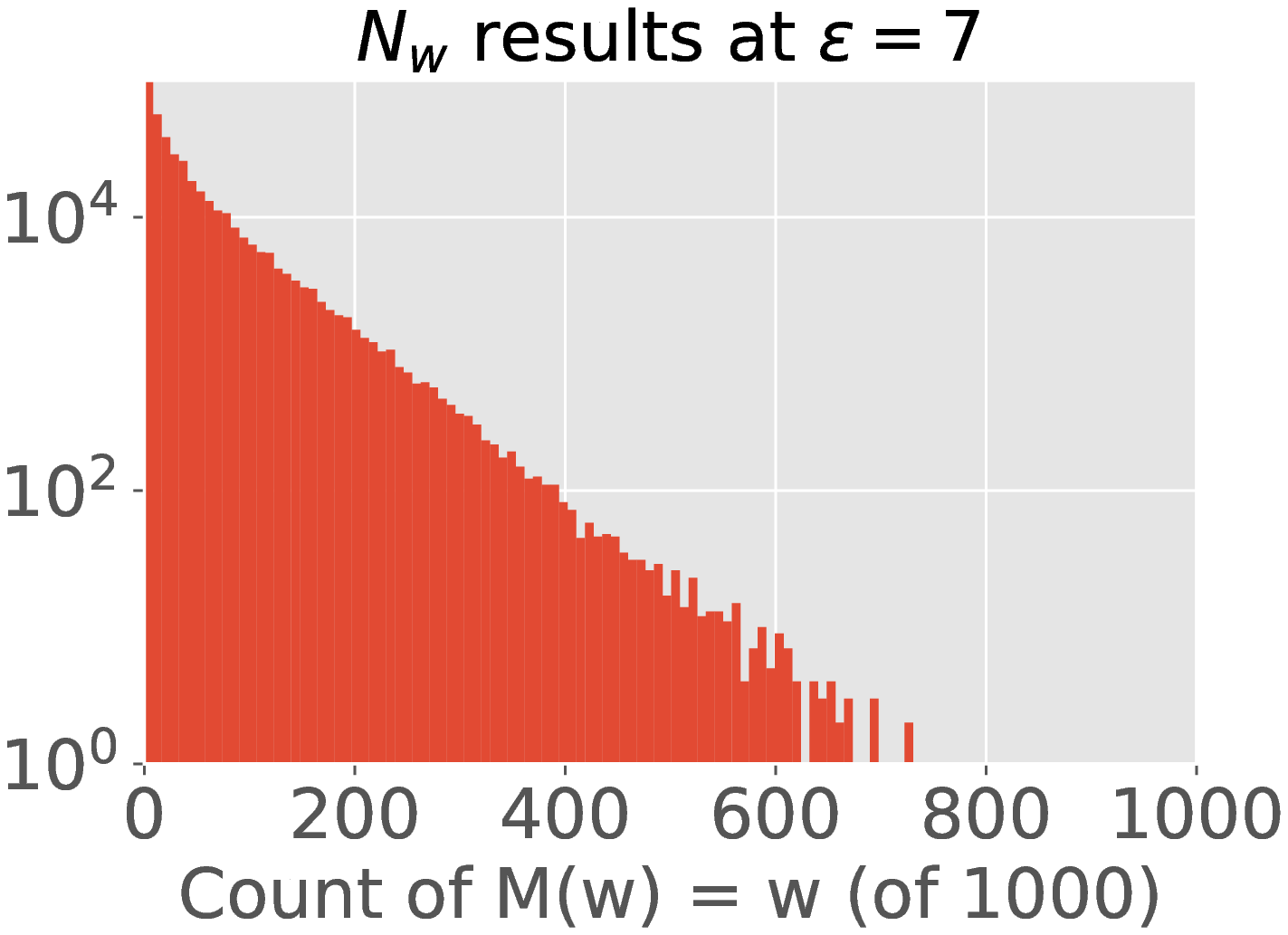}
    \includegraphics[width=.19\linewidth]{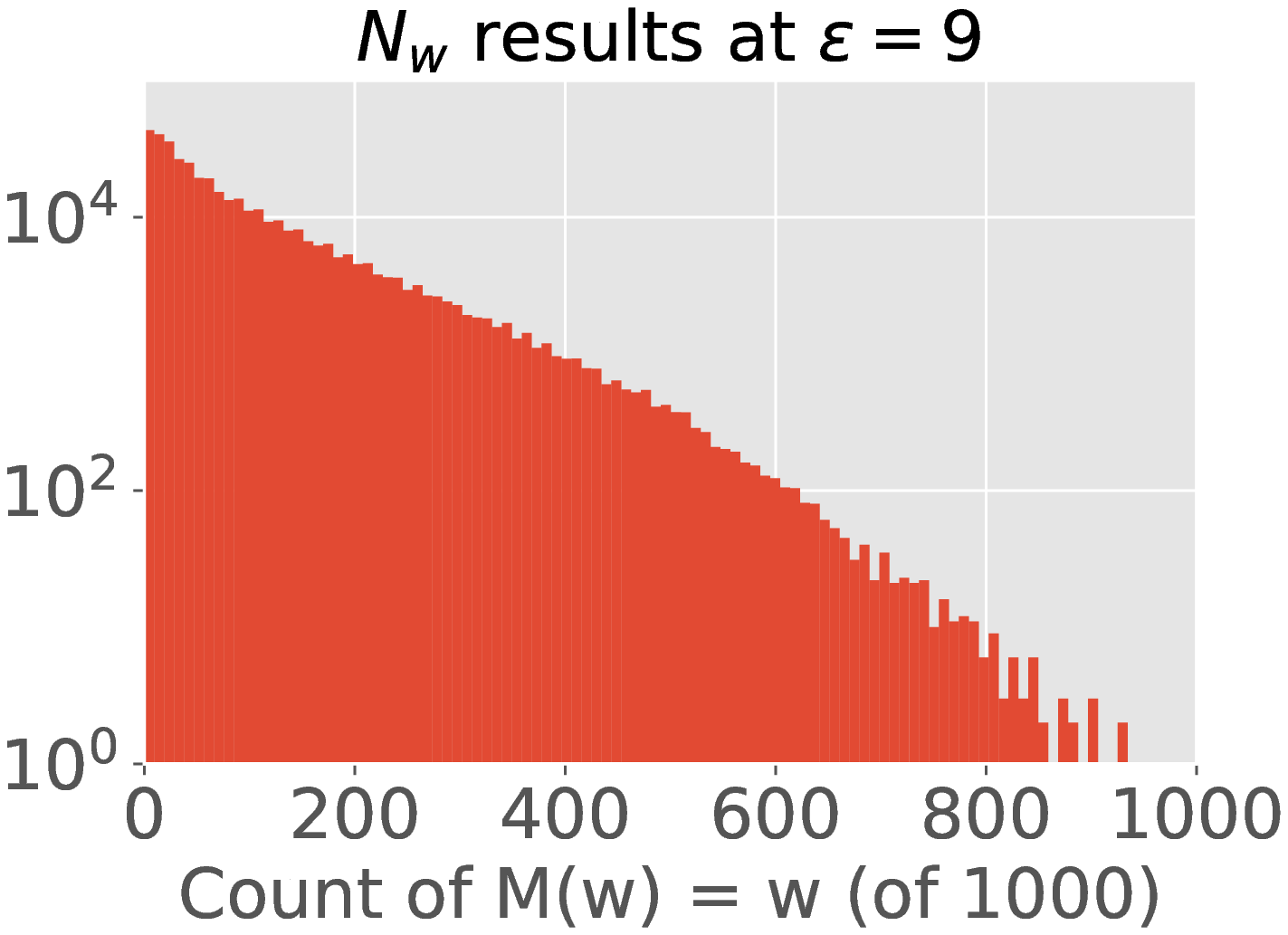}
    \includegraphics[width=.19\linewidth]{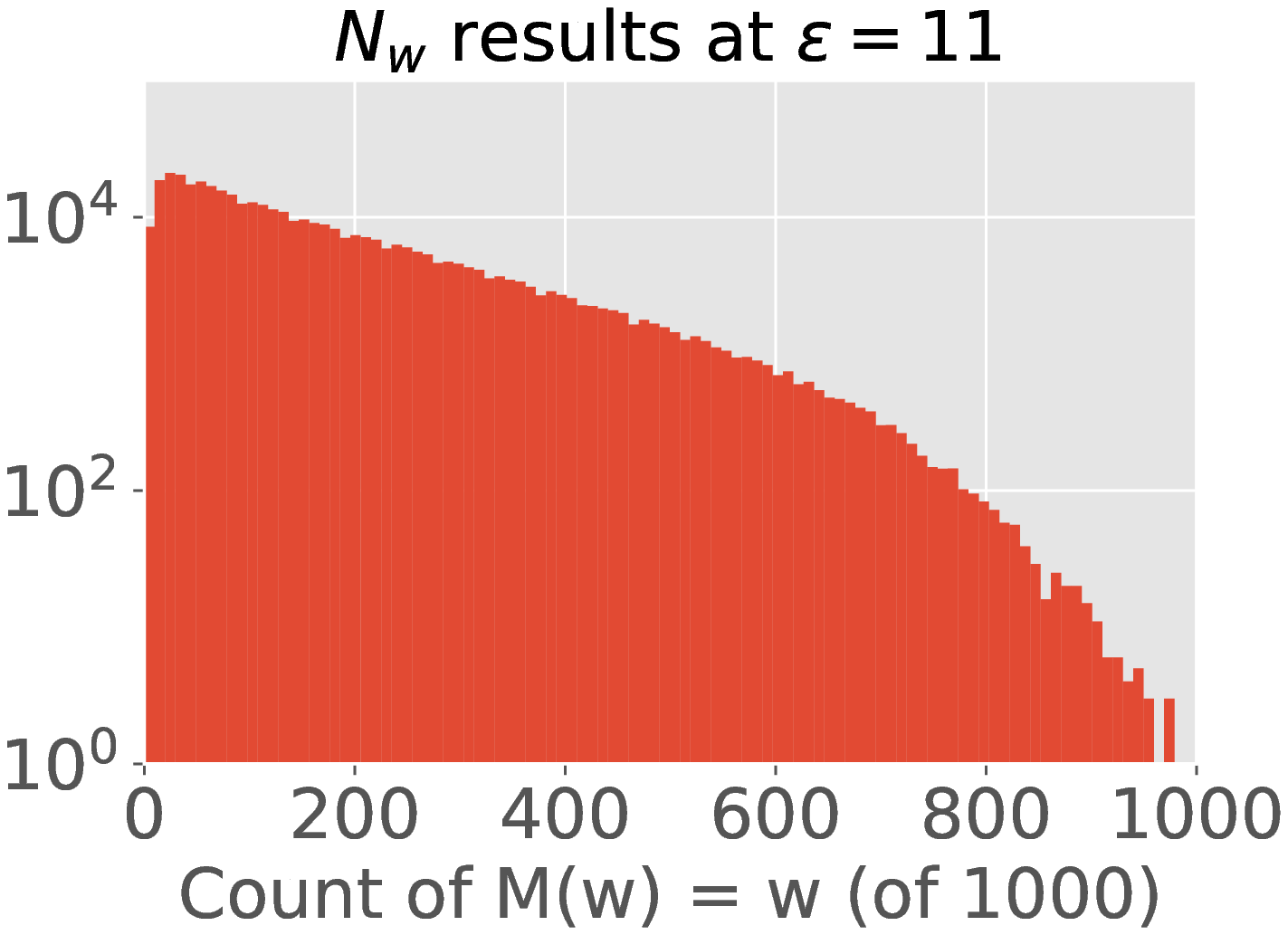}
    \includegraphics[width=.19\linewidth]{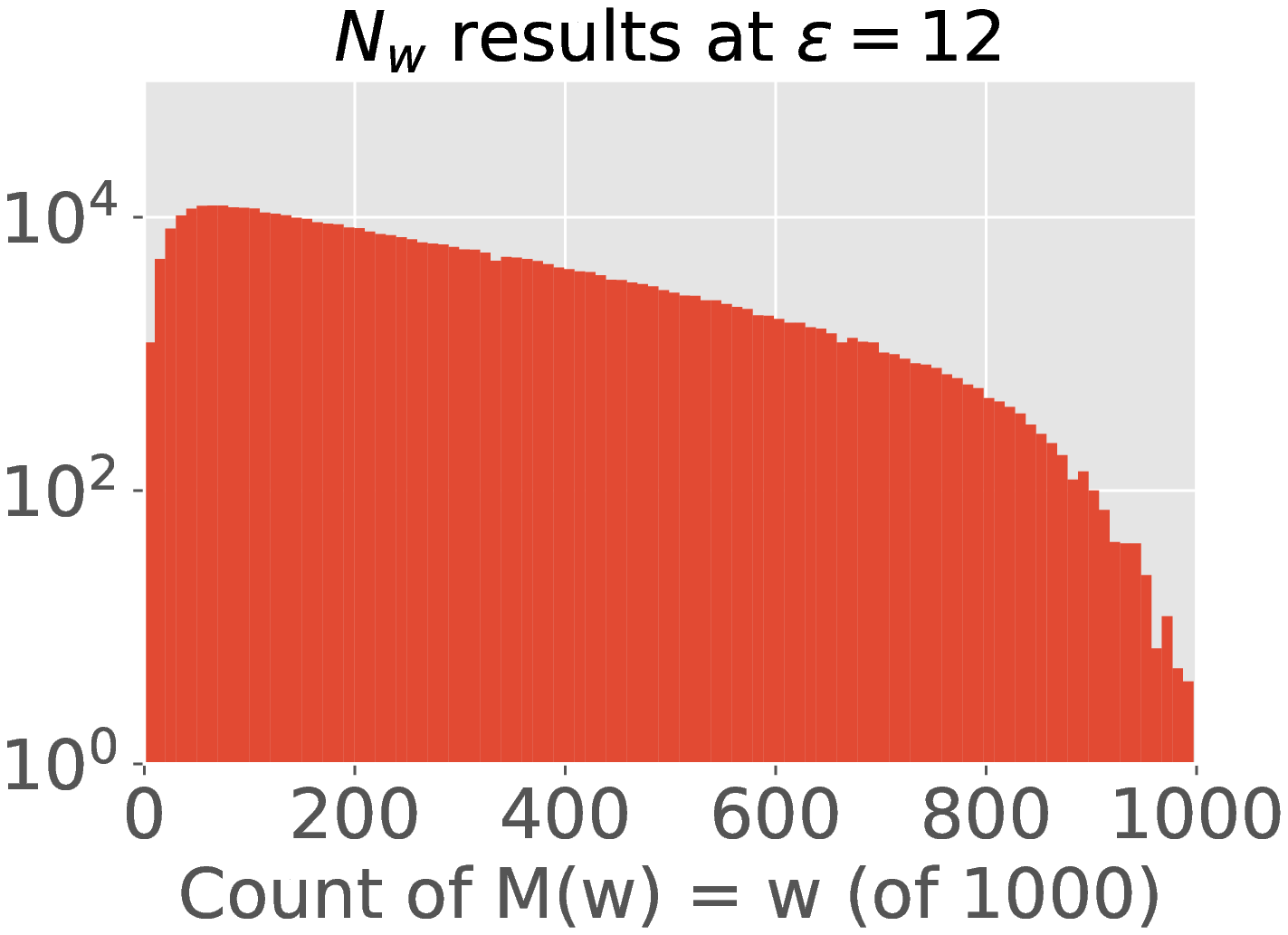}
\end{subfigure}
\caption{Empirical $S_w$ and $N_w$ statistics for $50$ dimensional \textsc{GloVe} word embeddings as a function of $\varepsilon$.}
\label{fig:tune-epsilon-glove50d}
\end{figure*}

\subsection{Word Distribution Statistics}
\label{sec:word_distribution}
We ran the mechanism $1,000$ times on input $w$ to compute the plausible deniability statistics $S_w$ and $N_w$ at different values of $\varepsilon$ for each word embedding model. For each word $w$ and the corresponding list of $1,000$ new words $W'$ from our $d_{\chi}$ perturbation, we recorded: (i) the probability $N_w = \Pr[M(w) = w]$ of not modifying the input word $w$ (estimated as the empirical frequency of the event $M(w) = w$); and (ii) the (effective) support of the output distribution $S_w$ (estimated by the distinct words in $W'$). 


The results presented in \Cref{fig:tune-epsilon-glove50d} provide a visual way of selecting $\varepsilon$ for task types of different sensitivities. We can select appropriate values of $\varepsilon$ by selecting our desired \emph{worst case} guarantees, then observing the extreme values of the histograms for $N_w$ and $S_w$. For example, at $\varepsilon = 5$, no word yields fewer than $300$ distinct new words ($S_w$ graph), and no word is ever returned more than $500$ times in the worst case ($N_w$ graph). Therefore, by looking at the worst case guarantees of $S_w$ and $N_w$ over different values of $\varepsilon$, we can make a principled choice on how to select $\varepsilon$ for a given embedding model.

\begin{table*}[h]
\footnotesize
\begin{center}
\begin{tabular}{| c | l | l | l | l | l | l | l | l | }
 \hline
&& \multicolumn{2}{c|}{\textbf{\emph{w} = encryption}} & \multicolumn{2}{c|}{\textbf{\emph{w} = hockey}} & \multicolumn{2}{c|}{\textbf{\emph{w} = spacecraft}}\\
 \hline
$\varepsilon$ &Avg. $N_w$  & \textsc{GloVe} & \textsc{fastText} & \textsc{GloVe} & \textsc{fastText} & \textsc{GloVe} & \textsc{fastText} \\ 
 \hline
 \hline
\parbox[t]{2mm}{\multirow{17}{*}{\rotatebox[origin=c]{90}{$\longleftarrow$ increasing $\varepsilon$, better semantics}}}

&$50$ & freebsd & ncurses & stadiumarena & dampener & telemeter & geospace \\
&& multibody & vpns & futsal & popel & deorbit & powerup \\ 
&& 56-bit & tcp & broomball & decathletes & airbender & skylab \\
&& public-key & isdn & baseballer & newsweek & aerojet & unmanned \\
\cline{2-8}
&$100$ & ciphertexts & plaintext & interleague & basketball & laser & voyager \\
&& truecrypt & diffie-hellman & usrowing & lacrosse & apollo-soyuz & cassini-huygens \\
&& demodulator & multiplexers & football & curlers & agena & adrastea \\
&& rootkit & cryptography & lacrosse & usphl & phaser & intercosmos \\
\cline{2-8}
&$200$ &harbormaster & cryptographic & players & goaltender & launch & orbited \\
&& unencrypted & ssl/tls & ohl & ephl & shuttlecraft & tatooine \\
&& cryptographically & authentication & goaltender & speedskating & spaceborne & flyby \\
&& authentication & cryptography & defenceman & eishockey & interplanetary & spaceborne \\
\cline{2-8}
&$300$ & decryption & encrypt & nhl & hockeygoalies & spaceplane & spaceship \\
&& encrypt & unencrypted & hockeydb & hockeyroos & spacewalk & spaceflights \\
&& encrypted & encryptions & hockeyroos & hockeyettan & spaceflights & satellites \\
&& encryption & encrypted & hockey & hockey & spacecraft & spacecraft \\
 \hline

\hline
\end{tabular}
\caption{Output $\hat{w} = M(w)$ on topic model words from the 20 Newsgroups dataset. Selected words $w$ are from \cite{larochelle2012neural} } \label{tab:all_examples}
\end{center}
\end{table*}

\subsection{Selecting Between Different Embeddings}
Our analysis gives a reasonable approach to selecting $\varepsilon$ (i.e., via \emph{worst case} guarantees) by means of the proxies provided by the plausible deniability statistics. In general, tuning privacy parameters in $d_{\chi}$-privacy is still a topic under active research \cite{hsu2014differential}, especially with respect to what $\varepsilon$ means for different applications. 

\begin{figure}[h]
\begin{subfigure}[t]{\linewidth}
    \includegraphics[width=.51\linewidth]{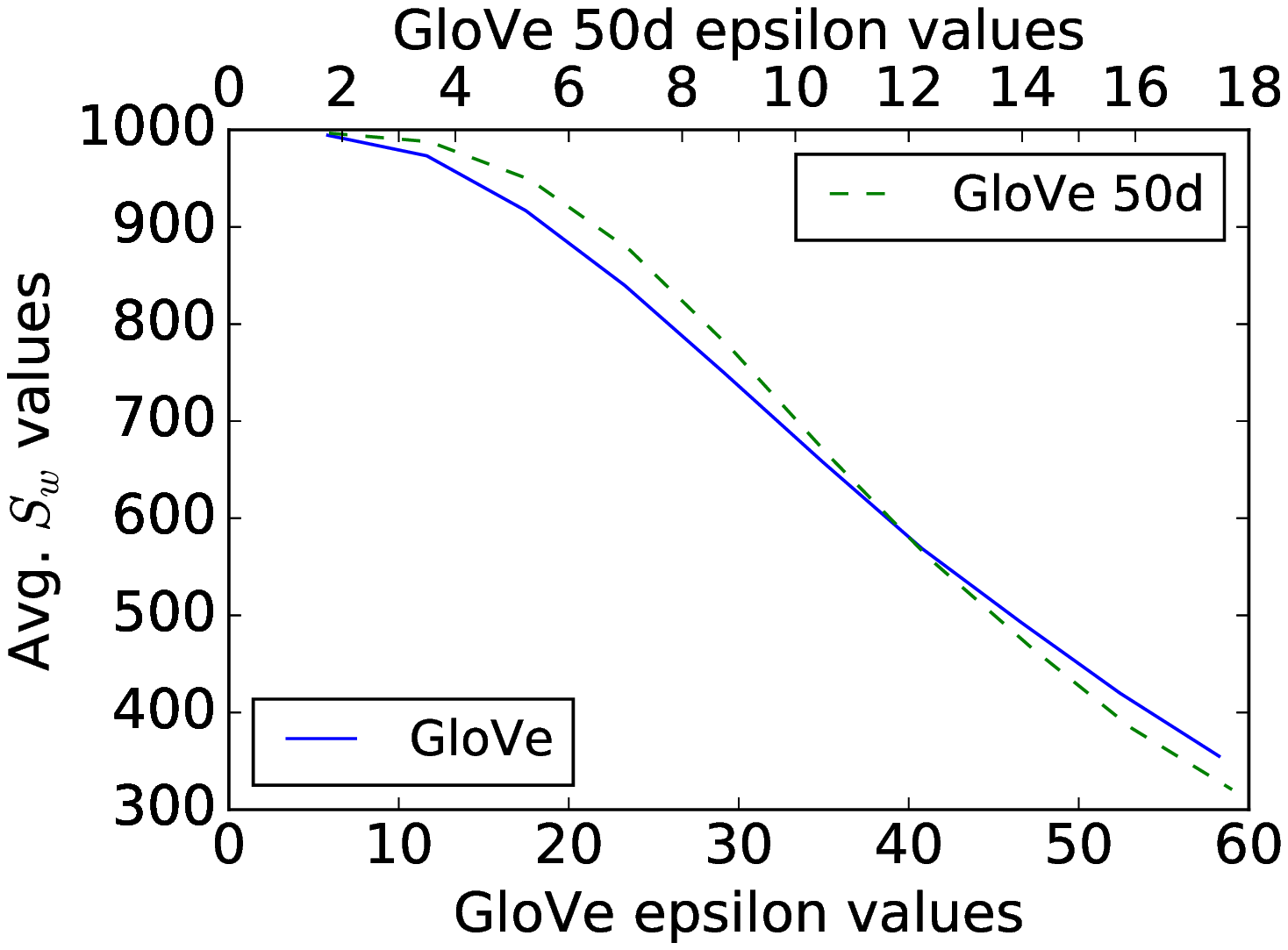}
    \includegraphics[width=.497\linewidth]{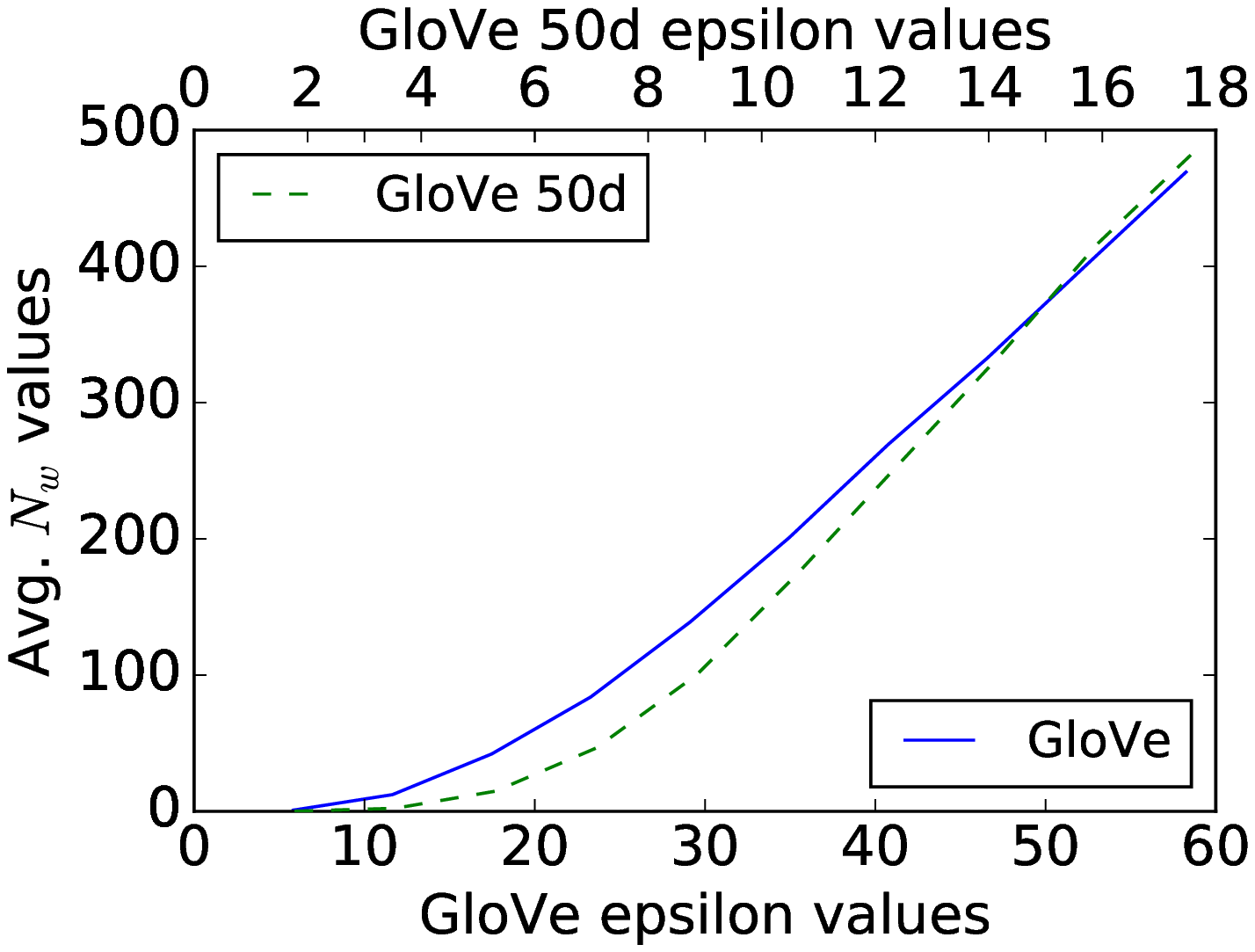}
\end{subfigure}
\caption{Average $S_w$ and $N_w$ statistics: \textsc{GloVe} $50d$ and $300d$}
\label{fig:emb_glove_stat_compare}
\end{figure}

With regards to the same embedding model with different dimensionalities, \Cref{fig:emb_glove_stat_compare} suggests that they do provide the same level of average case guarantees (at `\emph{different}' values of $\varepsilon$). Therefore, selecting a model becomes a function of utility on downstream tasks. \Cref{fig:emb_glove_stat_compare} further underscores the need to interpret the notion of $\varepsilon$ in $d_{\chi}$-privacy within the context of the metric space.

\begin{figure}[h]
\begin{subfigure}[t]{\linewidth}
    \includegraphics[width=.51\linewidth]{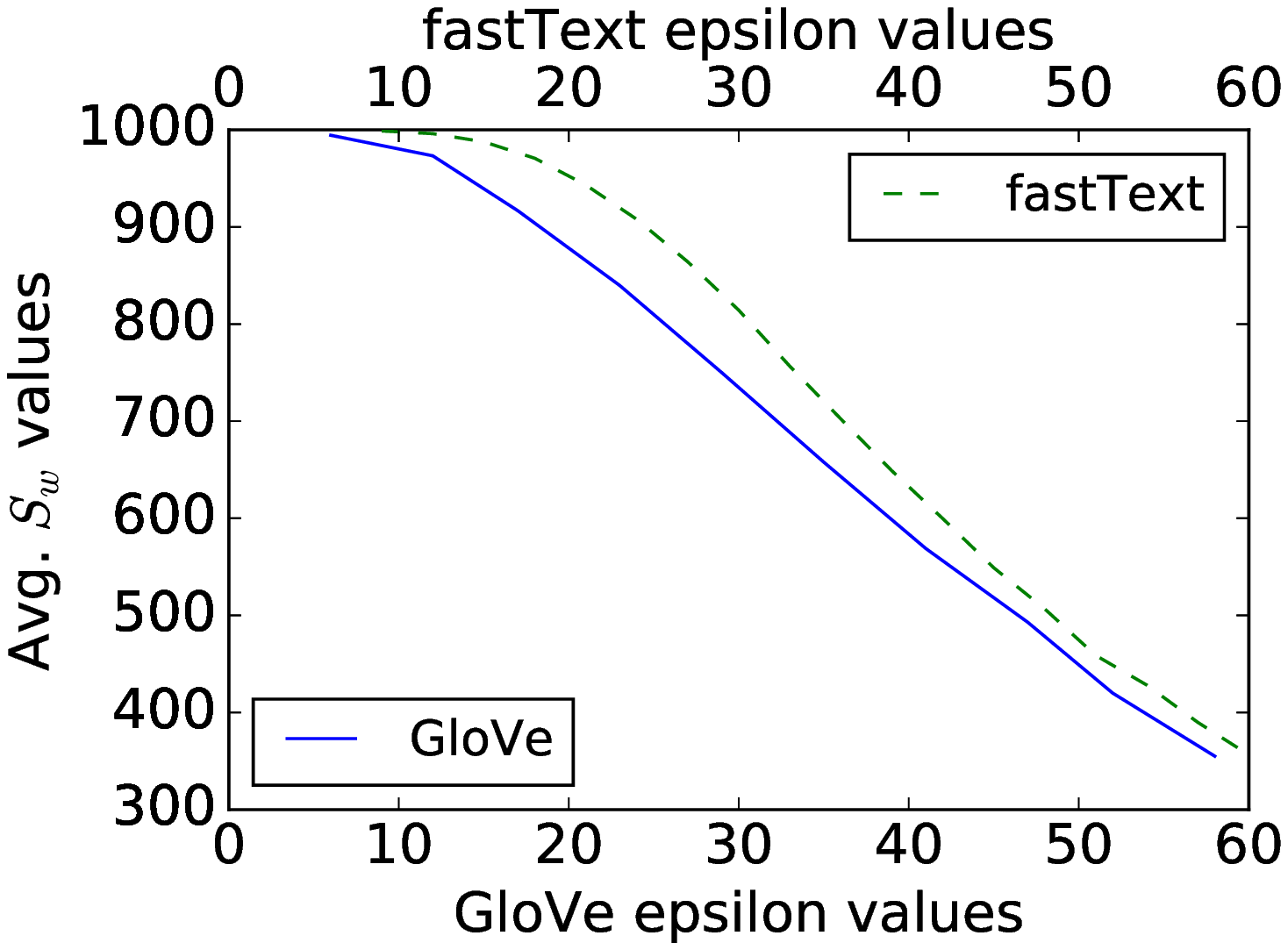}
    \includegraphics[width=.497\linewidth]{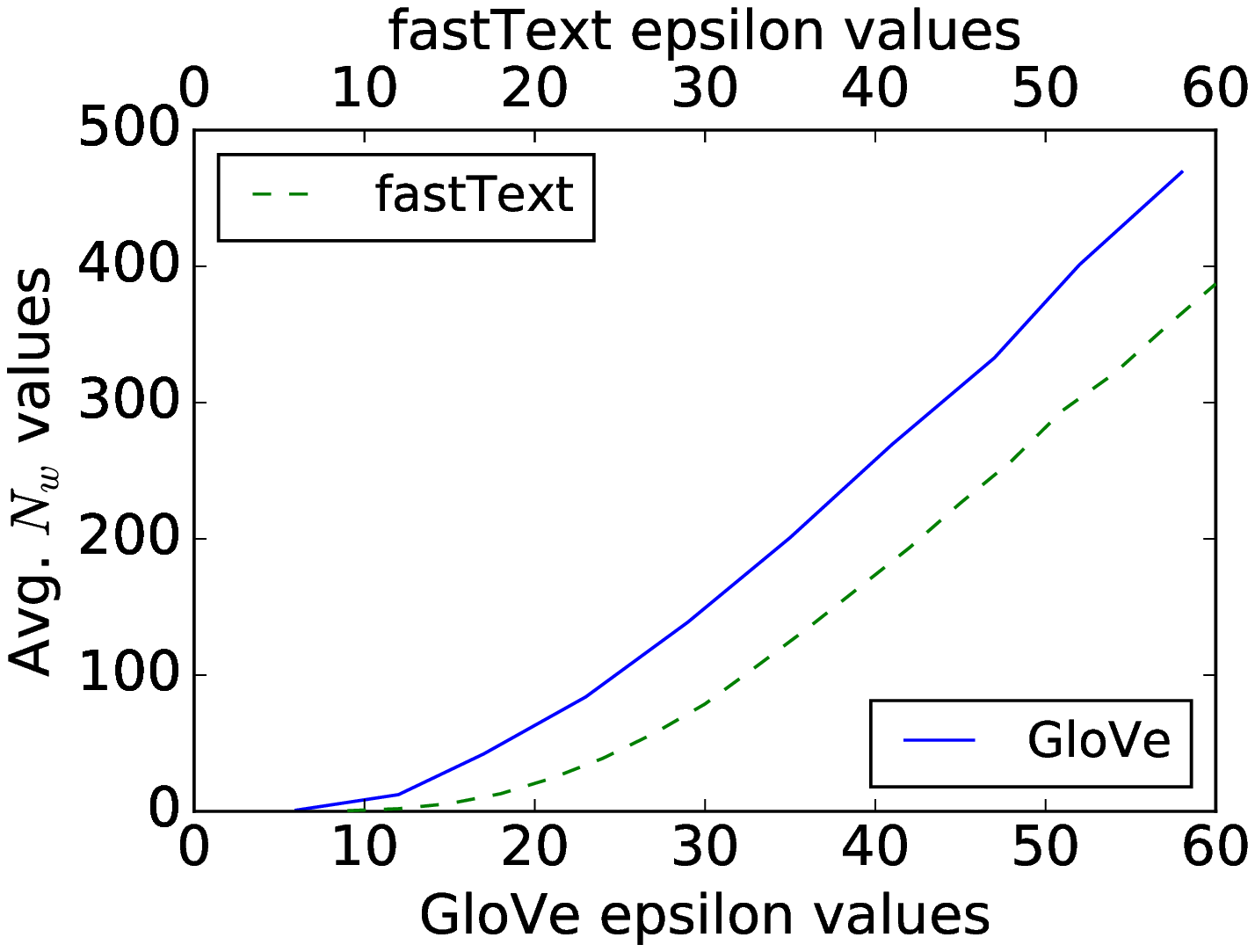}
\end{subfigure}
\caption{Average $S_w$ and $N_w$ statistics: \textsc{GloVe} and \textsc{fastText}}
\label{fig:emb_stat_compare}
\end{figure}

In \Cref{fig:emb_stat_compare}, we present the average values of $S_w$ and $N_w$ statistics for \textsc{GloVe} and \textsc{fastText}. 
However, the average values are not sufficient to make a conclusive comparison between embedding models since different distributions can result in the same entropy -- therefore, we recommend setting worst case guarantees. For further discussions on the caveats of interpreting entropy based privacy metrics see \cite{wagner2018technical}.

\Cref{tab:all_examples} presents examples of word perturbations on similar mechanisms calibrated on \textsc{GloVe} and \textsc{fastText}. The results show that as the average values of $N_w$ increase (corresponding to higher values of $\varepsilon$), the resulting words become more similar to the original word.

\section{\ac{ML} Utility Experiments}\label{sec:ml_experiments}
We describe experiments we carried out to demonstrate the trade-off between privacy and utility for three downstream \ac{NLP} tasks.

\subsection{Datasets}
We ran experiments on three textual datasets, each representing a common task type in \ac{ML} and \ac{NLP}. The datasets are: IMDb movie reviews (binary classification) \cite{maas2011learning}, Enron emails (multi-class classification) \cite{klimt2004enron}, and InsuranceQA (question answering) \cite{feng2015applying}. Each dataset contains contributions from individuals making them suitable dataset choices. \Cref{tab:dataset_summary} presents a summary of the datasets.

\begin{table}[h]
\smaller
\begin{tabular}{lllll}
\toprule
Dataset & IMDb & Enron & InsuranceQA\\
\midrule
Task type & binary  & multi-class  & QA \\
Training set size & $25,000$ & $8,517$ & $12,887$ \\
Test set size & $25,000$ & $850$ & $1,800$ \\
Total word count & $5,958,157$ & $307,639$ & $92,095$ \\
Vocabulary size & $79,428$ & $15,570$ & $2,745$ \\
Sentence length & \makecell[l]{$\mu = 42.27$ \\ $\sigma = 34.38$} & \makecell[l]{$\mu = 30.68$ \\ $\sigma = 31.54$} & \makecell[l]{$\mu = 7.15$ \\ $\sigma = 2.06$} \\
\bottomrule
\end{tabular}
\caption{Summary of selected dataset properties}
\label{tab:dataset_summary}
\end{table}

\subsection{Setup for utility experiments}
\label{sec:ml_setup}
For each dataset, we demonstrated privacy vs. utility at:

\textbf{Training time:} we trained the models on perturbed data, while testing was carried out on plain data. This simulates a scenario where there is more access to private training data.

\textbf{Test time:} here, we trained the models completely on the available training set. However, the evaluation was done on a privatized version of the test sets. 


\subsection{Baselines for utility experiments}
All models in our experiments use $300d$ \textsc{GloVe} embeddings (hence the seemingly \emph{larger} values of $\varepsilon$. See discourse in \Cref{sec:mdp} and \Cref{fig:emb_glove_stat_compare}) on \ac{biLSTM} models \cite{graves2005bidirectional}. 

\textbf{IMDb movie reviews}
The training set was split to as in \cite{maas2011learning}. The privacy algorithm was run on the partial training set of $15,000$ reviews. The evaluation metric used was classification accuracy. 

\textbf{Enron emails}
Our test set was constructed by sampling a random $10$\% subset of the emails of $10$ selected authors in the dataset. The evaluation metric was also classification accuracy.

\textbf{InsuranceQA}
We replicated the results from \cite{tan2015lstm} with \ac{GESD} as similarity scores. The evaluation metrics used \ac{MAP} and \ac{MRR}.

The purpose of our baseline models was not to advance the state of the art for those specific tasks. They were selected to provide a standard that we could use to compare our further experiments. 

\subsection{Results for utility experiments}

\Cref{tab:dataset_summary} presents a high level summary of some properties of the $3$ datasets. It gives insights into the size of the vocabulary of each dataset, the total number of words present, the average length and standard deviation of sentences. The InsuranceQA dataset consisted of short one line questions within the insurance domain -- this is reflective in its smaller vocabulary size, shorter sentence length and small variance. The other $2$ datasets consisted of a broader vocabulary and wide ranging sentence structures.

\begin{figure*}[h]
\centering
\begin{subfigure}[t]{\textwidth}
\centering
    \includegraphics[width=.163\linewidth]{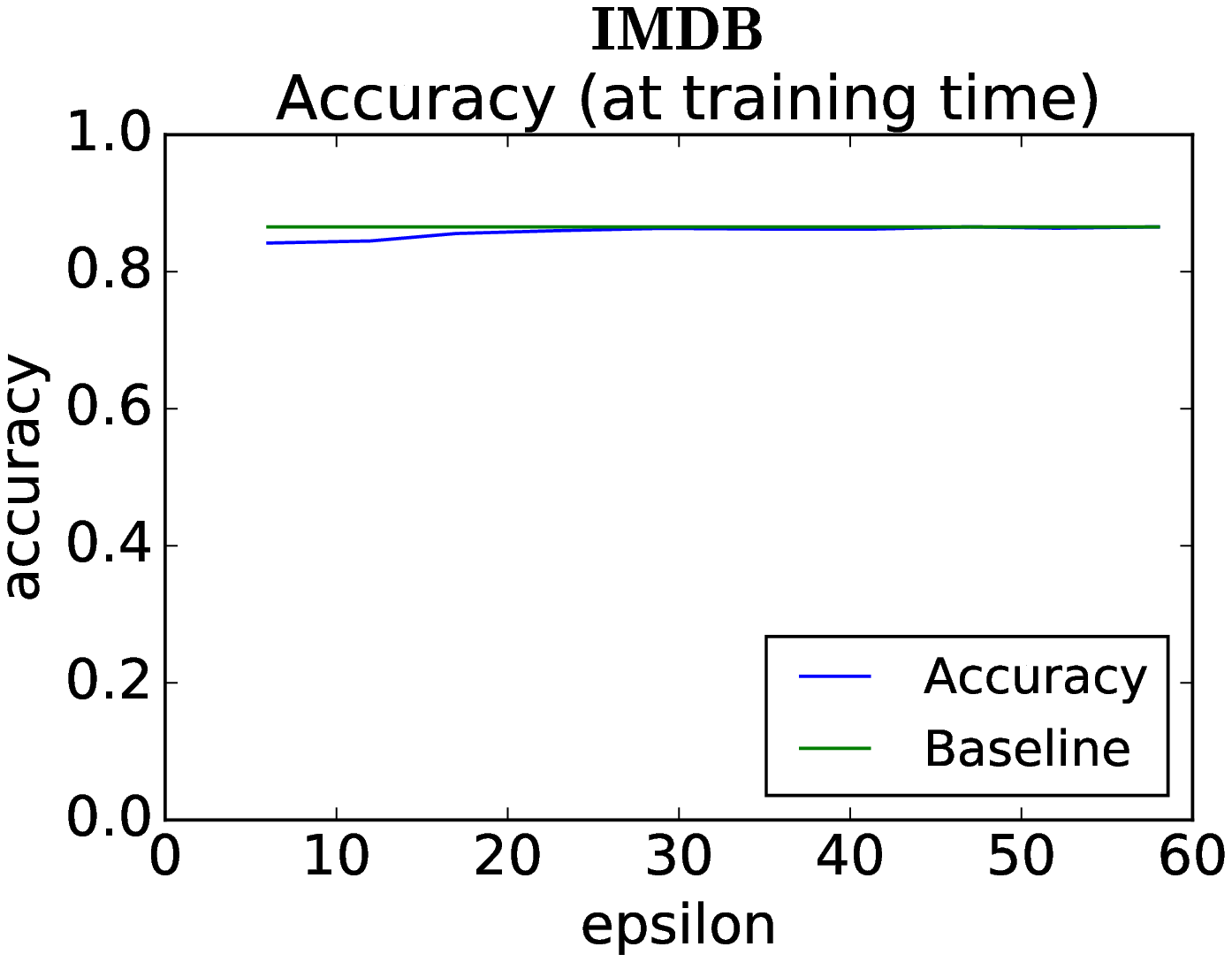}
    \includegraphics[width=.163\linewidth]{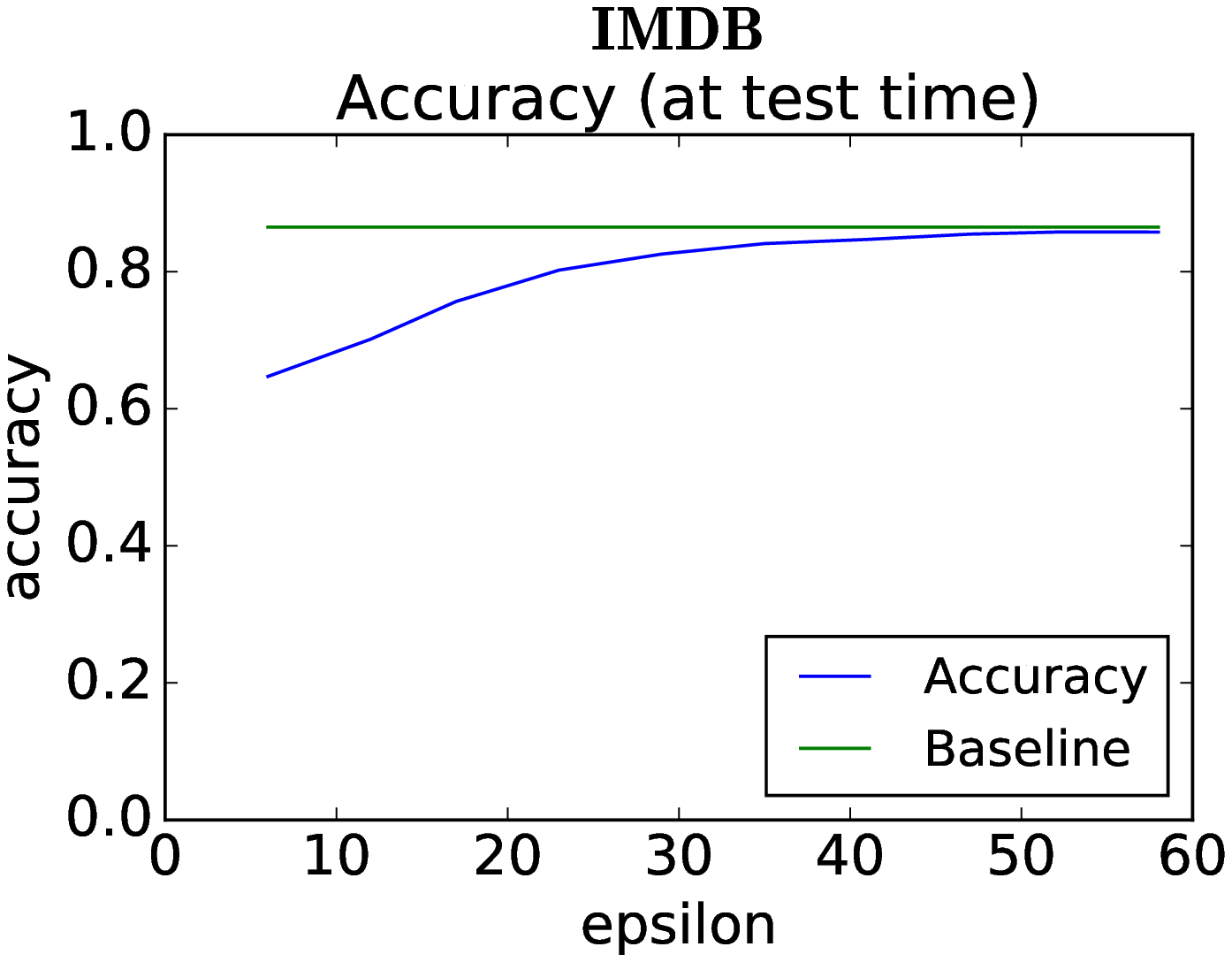}
    \includegraphics[width=.163\linewidth]{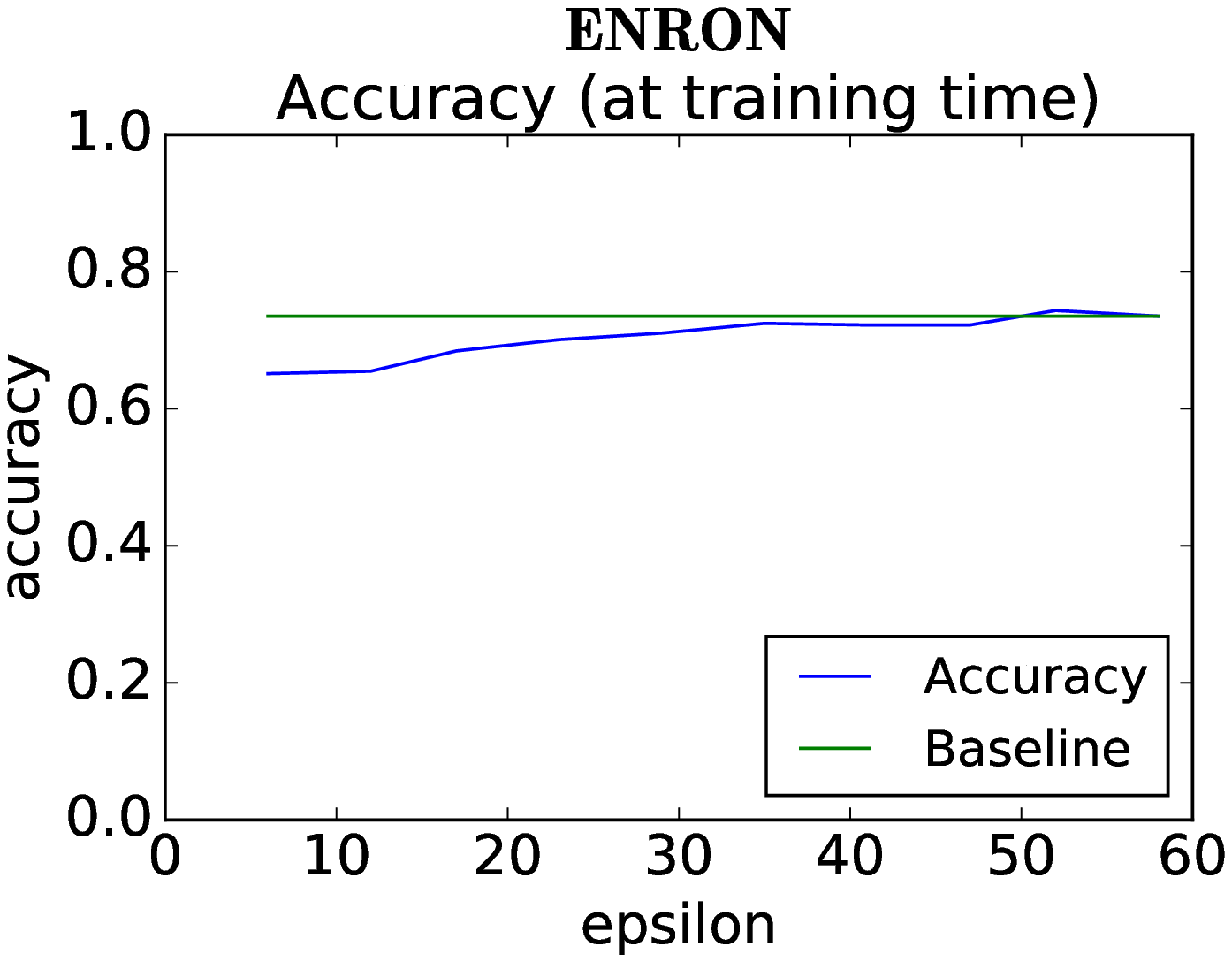}
    \includegraphics[width=.163\linewidth]{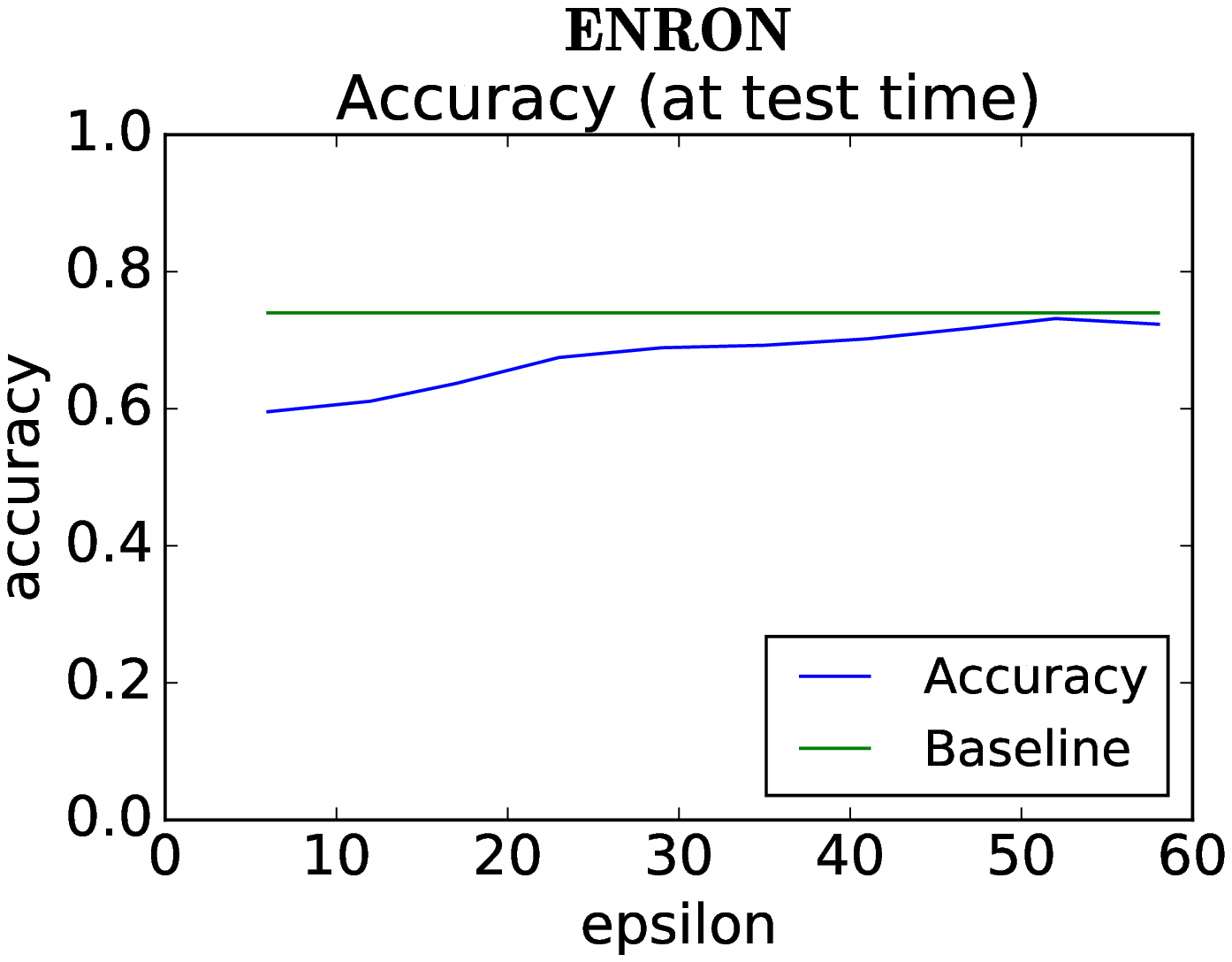}
    \includegraphics[width=.163\linewidth]{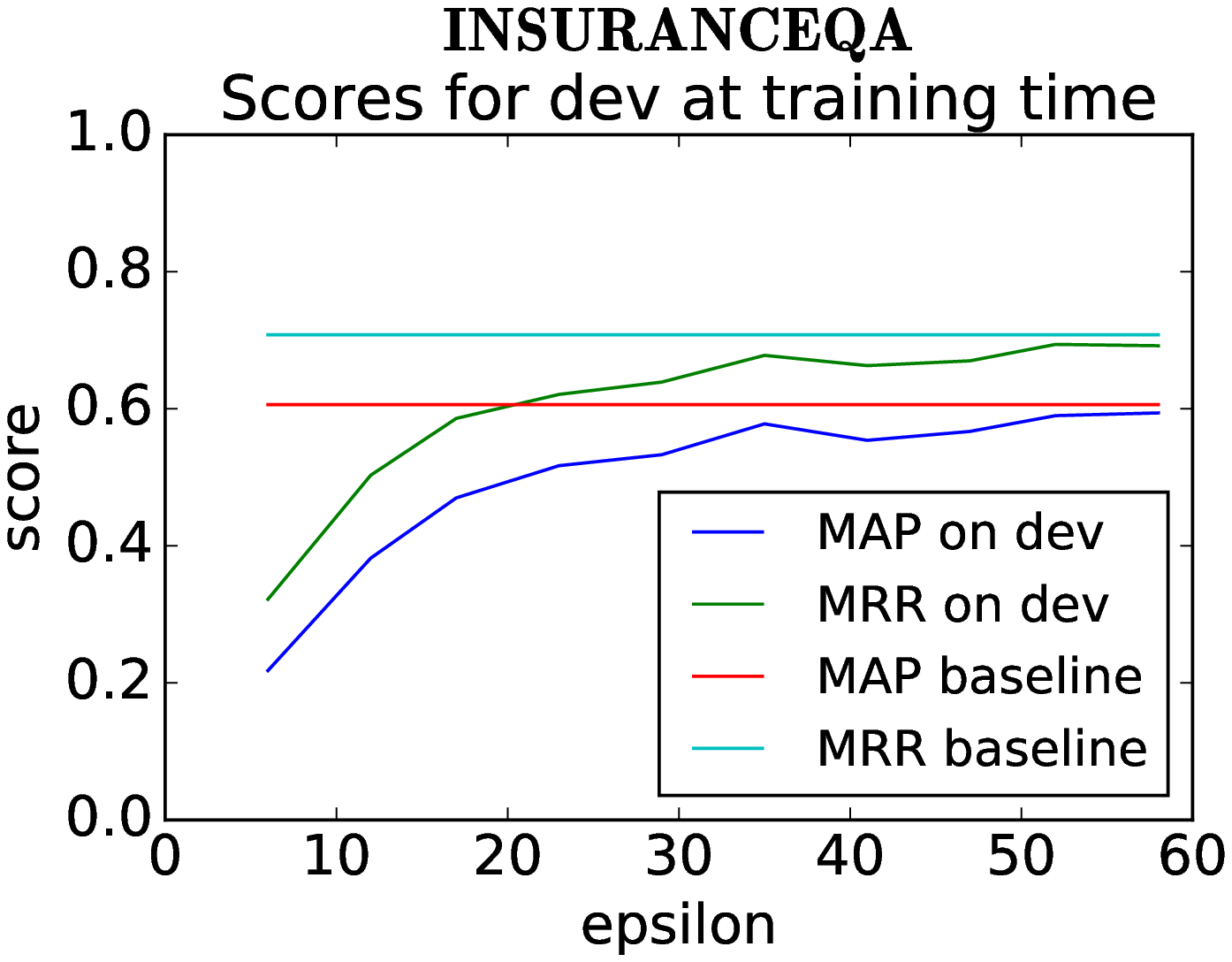}
    \includegraphics[width=.163\linewidth]{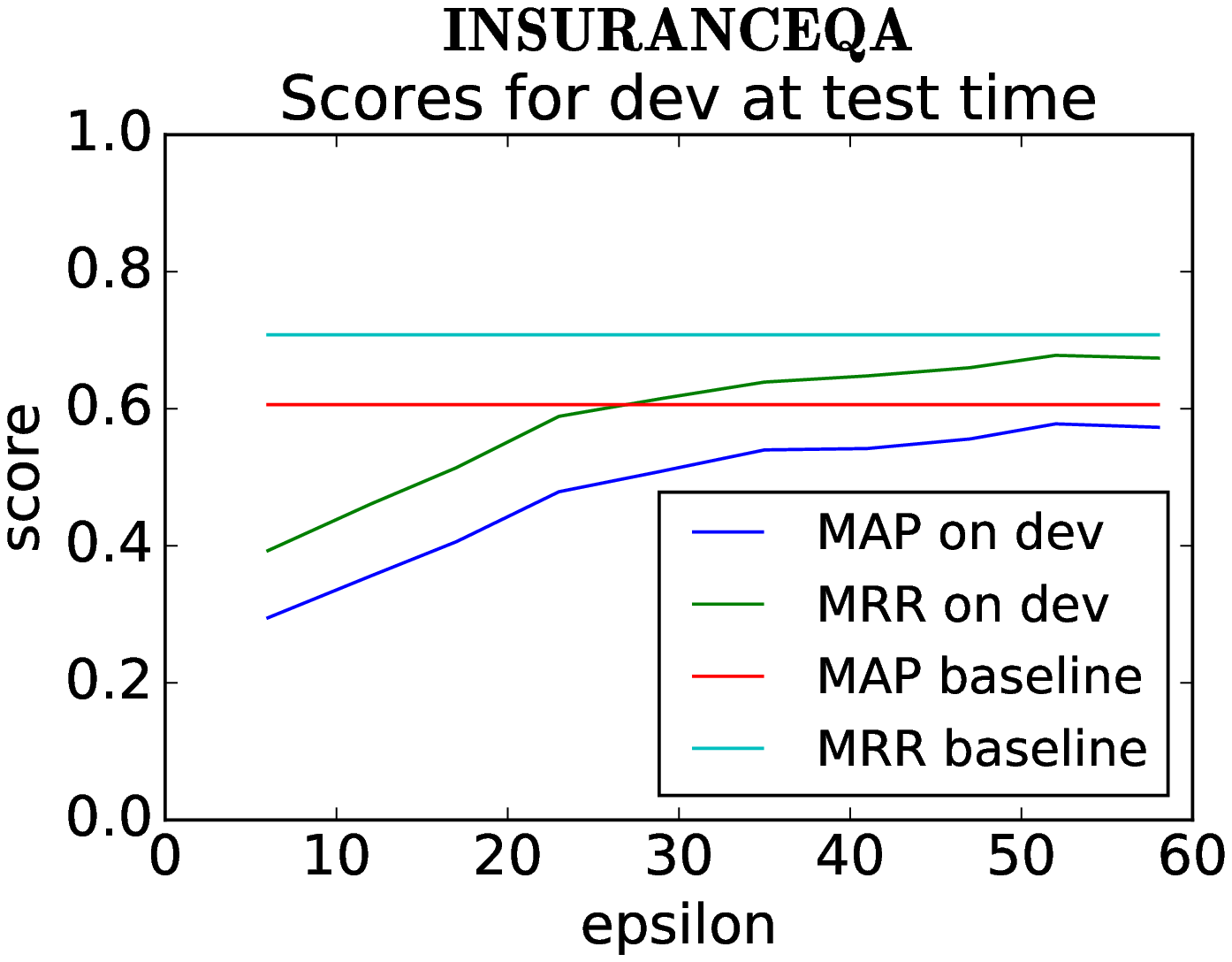}
\end{subfigure}
\caption{$d_{\chi}$-privacy scores against utility baseline}
\label{fig:all_results}
\end{figure*}

We now discuss the individual results from running the $d_{\chi}$ algorithm on machine learning models trained on the $3$ datasets presented in \Cref{fig:all_results}. We start with the binary sentiment classification task on the IMDb dataset. 
Across the $3$ experiments, we observe the expected privacy utility trade-off. As $\varepsilon$ increases (greater privacy loss), the utility scores improve. Conversely, at smaller values of $\varepsilon$, we record worse off scores. However, this observation varies across the tasks. For the binary classification task, at training and test time, the model remains robust to the injected perturbations. Performance degrades on the other $2$ tasks with the question answering task being the most sensitive to the presence of noise.

\section{\ac{ML} Privacy Experiments}\label{sec:ml_priv_experiments}
We now describe how we evaluate the privacy guarantees from our approach against two query scrambling methods from literature.

\subsection{Baselines for privacy experiments}
We evaluated our approach against the following baselines:

\textbf{Versatile} \cite{arampatzis2015versatile} -- using the `semantic' and `statistical' query scrambling techniques. Sample queries were obtained from the paper. 

\textbf{Incognito} \cite{masood2018incognito} -- using perturbations `with' and `without' noise. Sample queries were also obtained from the paper. 

\subsection{Datasets}
\textbf{Search logs} -- The two evaluation baselines \cite{arampatzis2015versatile,masood2018incognito} sampled data from \cite{pass2006picture} therefore, we also use this as the dataset for our approach.

\subsection{Setup for privacy experiments}
We evaluate the baselines and our approach using the privacy auditor described in \cite{songshmatikovkdd}. We modeled our experiments after the paper as follows: From the search logs dataset \cite{pass2006picture}, we sampled users with between $150$ and $500$ queries resulting in $8,670$ users. We randomly sampled $100$ users to train and another $100$ users (negative examples) to test the privacy auditor system.

For each evaluation baseline, we dropped an existing user, then created a new user and injected the scrambled queries using the baseline's technique. The evaluation metrics are: Precision, Recall, Accuracy and \ac{AUC}. The metrics are computed over the ability of the privacy auditor to correctly identify queries \emph{used} to train the system, and queries \emph{not used} to train the system.

\subsection{Results for privacy experiments}

The metrics in \Cref{tab:priv_baselines} depict privacy loss (\ie{} lower is better). The results highlight that existing baselines fail to prevent attacks by the privacy auditor. The auditor is able to perfectly identify queries that were perturbed using the baseline techniques regardless of whether they were actually used to train the system or not.

\begin{table}[h]
\smaller
\begin{center}
\begin{tabular}{ l | c | c | c | c  }
 \toprule
\textbf{Model} & \textbf{Precision} & \textbf{Recall} & \textbf{Accuracy} & \textbf{AUC} \\ 
 \hline
Original queries & 1.0 & 1.0 & 1.0 & 1.0 \\
\hline
Versatile (semantic) & 1.0 & 1.0 & 1.0 & 1.0 \\
Versatile (statistical) & 1.0 & 1.0 & 1.0 & 1.0 \\
\hline
Incognito (without noise) & 1.0 & 1.0 & 1.0 & 1.0 \\
Incognito (with noise) & 1.0 & 1.0 & 1.0 & 1.0 \\
\hline
$d_{\chi}$-privacy (at $\varepsilon = 23$) & \textbf{0.0} & \textbf{0.0} & \textbf{0.5} & \textbf{0.36} \\
\bottomrule
\end{tabular}
\caption{Results: scores measure privacy loss (lower is better)}
\label{tab:priv_baselines}
\end{center}
\end{table}


\noindent Conversely, our $d_{\chi}$ approach in the last line of \Cref{tab:priv_baselines} and expanded in \Cref{tab:priv_dx}, show we are able provide tunable privacy guarantees (over $3$x greater than baselines for $\varepsilon \leq 23$ on \ac{AUC} scores). Across all metrics (at $\varepsilon \leq 23$), our privacy guarantees is better than chance.

\begin{table}[h]
\smaller
\begin{center}
\begin{tabular}{ l | c | c | c | c | c | c | c | c | c    }
 \hline
  & \multicolumn{9}{c}{$\varepsilon$ for \textsc{GloVe} $300d$} \\
 \hline
\textbf{Metric} & $\mathbf{\gloveScale{100}}$ & $\mathbf{\gloveScale{200}}$ & $\mathbf{\gloveScale{300}}$ & $\mathbf{\gloveScale{400}}$ & $\mathbf{\gloveScale{500}}$ & $\mathbf{\gloveScale{600}}$ & $\mathbf{\gloveScale{700}}$ & $\mathbf{\gloveScale{800}}$ & $\mathbf{\gloveScale{900}}$ \\ 
 \hline
Precision & 0.00 & 0.00 & 0.00 & 0.00 & 0.67 & 0.90 & 0.93 & 1.00 & 1.00 \\
Recall & 0.00 & 0.00 & 0.00 & 0.00 & 0.02 & 0.09 & 0.14 & 0.30 & 0.50 \\
Accuracy & 0.50 & 0.50 & 0.50 & 0.50 & 0.51 & 0.55 & 0.57 & 0.65 & 0.75 \\
\ac{AUC} & 0.06 & 0.04 & 0.11 & 0.36 & 0.61 & 0.85 & 0.88 & 0.93 & 0.98 \\
\hline
\end{tabular}
\caption{$d_{\chi}$-privacy results: all scores measure privacy loss}
\label{tab:priv_dx}
\end{center}
\end{table}

\section{Discussion}
We have described how to achieve formal privacy guarantees in textual datasets by perturbing the words of a given query. Our experiments on machine learning models using different datasets across different task types have provided empirical evidence into the feasibility of adopting this technique. 
Overall, our findings demonstrate the tradeoffs between desired privacy guarantees and the achieved task utility. Previous work in data mining \cite{brickell2008cost,li2009tradeoff} and privacy research \cite{geng2014optimal,he2014blowfish} have described the cost of privacy and the need to attain tunable utility results. Achieving optimal privacy as described in \emph{Dalenius's Desideratum} \cite{dwork2011firm}
will yield a dataset that confers no utility to the curator. While techniques such as homomorphic encryption \cite{gentry2009fully} hold promise, they have not been developed to the point of practical applicability.

\section{Related work}

Text redaction for privacy protection is a well understood and widely studied problem \cite{butler2004us} with good solutions currently found wanting \cite{hill2016effectiveness}. This is amplified by the fact that redaction needs vary. For example, with transactional data (such as search logs), the objective is \emph{anonymity} or \emph{plausible deniability} so that the identity of the person performing the search cannot be ascertained. On the other hand, with plain text data (such as emails and medical records), the objective might be \emph{confidentiality} so that an individual is not associated with an entity. Our research is designed around conferring plausible deniability in search logs while creating a mechanism that can be extended to other plain text data types.

The general approach to text redaction in literature follows two steps: (i) detection of sensitive terms, and (ii) obfuscation of the identified entities. Our approach differs from related works in these two tasks. With respect to (i) in transactional data, \cite{masood2018incognito} is predicated on defining private queries and sensitive terms based on uniformity, uniqueness and linkability to predefined \ac{PII} such as names and locations. This approach however doesn't provide privacy guarantees for queries that fall outside this definition. Other methods such as  \cite{domingo2009h,pang2010embellishing,sanchez2013knowledge} bypass the detection of sensitive terms and inject additional keywords into the initial search query. It has been shown in  \cite{petit2015peas} that this model is susceptible to de-anonymisation attacks. On the other hand, techniques such as \cite{arampatzis2015versatile} do not focus on (i), but source (ii) i.e., replacement entities from related web documents while we use word embedding models for this step.

Similarly, for plain text data, approaches such as \cite{cumby2010inference,cumby2011machine} address (i) by using models to `recognize several classes of PII' such as names and credit cards, while \cite{sanchez2016c} focuses on (ii) that is, sanitizing an entity \emph{c} by removing all terms \emph{t} that can identify \emph{c} individually or in aggregate in a knowledge base \emph{K}. Indeed, any privacy preserving algorithm that places \apriori{} classification on sensitive data types assume boundaries on an attackers side knowledge and a finite limit on potentially new classes of personal identifiers. Our approach with $d_{\chi}$-privacy aims to do away with such assumptions to provide tunable privacy guarantees.

\section{Conclusion}

In this paper, we presented a formal approach to carrying out privacy preserving text perturbation using $d_{\chi}$-privacy. Our approach applied carefully calibrated noise to vector representations of words in a high dimension space as defined by word embedding models. We presented a theoretical privacy proof that satisfies $d_{\chi}$-privacy where the parameter $\varepsilon$ provides guarantees with respect to a metric $d(x,x')$ defined by the word embedding space. Our experiments demonstrated that our approach provides tunable privacy guarantees over $3$ times greater than the baselines, while incurring $< 2\%$ utility loss on training binary classifiers (among other task types) for a range of $\varepsilon$ values. By combining the results of our privacy and utility experiments, with our guidelines on selecting $\varepsilon$ by using worst-case guarantees from our plausible deniability statistics, data holders can make a rational choice in applying our mechanism to attain a suitable privacy-utility tradeoff for their tasks.

%
%

\bibliographystyle{ACM-Reference-Format}
\bibliography{madlib_kdd}

\clearpage
\newpage

\end{document}